\documentclass[runningheads]{llncs}
\usepackage{geometry}                
\geometry{letterpaper}                   
\usepackage{graphicx}
\usepackage{amssymb}
\usepackage{epstopdf}
\usepackage[noend]{algorithm2e}
\usepackage{tabularx}
\usepackage[disable]{todonotes}
\usepackage{multirow}
\usepackage{relsize}
\usepackage{graphicx}

\usepackage{subfig}
\usepackage{mathtools}

\DeclarePairedDelimiter\abs{\lvert}{\rvert}

\usepackage{ amssymb }
\usepackage{tikz}
\usetikzlibrary{arrows}
\usepackage{varwidth}
\usepackage{listings}
\lstset{
  basicstyle=\ttfamily,
  mathescape
}

\SetCommentSty{mycommfont}

\DeclareGraphicsRule{.tif}{png}{.png}{`convert #1 `dirname #1`/`basename #1 .tif`.png}

\newtheorem{observation}{Observation}

\newcommand{\targ}{\prod c^*_i}
\newcommand{\dist}{\mathcal{D}}

\newcommand{\genQ}{q}
\newcommand{\oneposQ}{\emph{1}Pos}
\newcommand{\posQ}{Pos}
\newcommand{\memQ}{\emph{Mem}}
\newcommand{\subQ}{\emph{Sub}}
\newcommand{\supQ}{\emph{Sup}}
\newcommand{\eqQ}{\emph{EQ}}
\newcommand{\pacQ}{\emph{EX}}

\newcommand{\supQi}[1]{\emph{Sup}}


\newcommand{\genC}{\#q}
\newcommand{\oneposC}{\#$1$Pos}
\newcommand{\posC}{\#Pos}
\newcommand{\memC}{\#\emph{Mem}}
\newcommand{\subC}{\#\emph{Sub}}
\newcommand{\supC}{\#\emph{Sup}}
\newcommand{\eqC}{\#\emph{EQ}}
\newcommand{\pacC}{\#\emph{EX}}

\newcommand{\genCi}[1]{\#q_{#1}}

\newcommand{\memCi}[1]{\#\emph{Mem}_{#1}}
\newcommand{\subCi}[1]{\#\emph{Sub}_{#1}}
\newcommand{\supCi}[1]{\#\emph{Sup}}
\newcommand{\eqCi}[1]{\#\emph{EQ}_{#1}}

\newcommand{\ntreef}{\mathfrak{c}}
\newcommand{\ntree}{\mathfrak{c}_{sub}}
\newcommand{\eqhard}{\mathfrak{C}}

\newcommand{\VC}{\mathcal{V}\mathcal{C}}

\newcommand{\Boxtimes}{\mathlarger{\mathlarger{\boxtimes}}}

\makeatletter
\def\moverlay{\mathpalette\mov@rlay}
\def\mov@rlay#1#2{\leavevmode\vtop{%
   \baselineskip\z@skip \lineskiplimit-\maxdimen
   \ialign{\hfil$\m@th#1##$\hfil\cr#2\crcr}}}
\newcommand{\charfusion}[3][\mathord]{
    #1{\ifx#1\mathop\vphantom{#2}\fi
        \mathpalette\mov@rlay{#2\cr#3}
      }
    \ifx#1\mathop\expandafter\displaylimits\fi}
\makeatother

\newcommand{\cupdot}{\charfusion[\mathbin]{\cup}{\cdot}}
\newcommand{\bigcupdot}{\charfusion[\mathop]{\bigcup}{\cdot}}

\newcommand{\disClass}{C_{\cupdot}}

\begin{document}
\raggedbottom

\title{Modularity in Query-Based Concept Learning} 
%
%
\author{Benjamin Caulfield\and
Sanjit A. Seshia
}
\authorrunning{B. Caulfield \& S. Seshia}
%
\institute{University of California, Berkeley CA 94720, USA 
\email{\{bcaulfield,sseshia\}@berkeley.edu}\\
}
\maketitle              
\begin{abstract}
We define and study the problem of modular concept learning, that is, learning a concept that is a cross product of component concepts.
If an element's membership in a concept depends solely on it's membership in the components, learning the concept as a whole can be reduced to learning the components. 
We analyze this problem with respect to different types of oracle interfaces, defining different sets of queries.
If a given oracle interface cannot answer questions about the components, learning can be difficult, even when the components are easy to learn with the same type of oracle queries.
While learning from superset queries is easy, learning from membership, equivalence, or subset queries is harder. 
However, we show that these problems become tractable when oracles are given a positive example and are allowed to ask membership queries.
\keywords{Inductive Synthesis, Query-Based Learning, Modularity}
\end{abstract}

%
%
%

\section{Introduction}

Inductive synthesis or inductive learning 
is the synthesis of programs (concepts) from examples or other observations. 
Inductive synthesis has found application in formal methods, program analysis,
software engineering, and related areas, for problems such as 
invariant generation (e.g.~\cite{garg2014ice}),
program synthesis (e.g.,~\cite{solar2006combinatorial}),
and compositional reasoning (e.g.~\cite{cobleigh2003learning}).
Most inductive synthesis follows the query-based learning model, where the
learner is allowed to make queries about the target concept to an oracle. 
Using the correct set of oracles can result in the polynomial time learnability of otherwise unlearnable sets \cite{angluin1988queries}. 
Using queries for software analysis is becoming increasingly popular (e.g., \cite{vaandrager17,howar2018active}).
The special nature of query-based learning for formal synthesis, where a program is automatically generated to fit a high-level specification through interaction with oracles,
has also been formalized \cite{jha2017theory}. 


In spite of this progress,
most algorithms for inductive learning/synthesis are monolithic; that is, even
if the concept (program) is made up of components, the algorithms seek to
learn the entire concept from interaction with an oracle.
In contrast, in this paper, we study the setting of {\em modular concept learning},
where a learning problem is analyzed by breaking it into independent components. 
If an element's membership in a concept depends solely on its membership in the 
components that make up the concept, 
learning the concept as a whole can be reduced to learning the components. 
We study concepts that are the Cartesian products (i.e., cross-products) of their 
component concepts. Such concept arise in several applications: 
(i) in invariant generation, an invariant that is the conjunction of other component
invariants;
(ii) in compositional reasoning, an automaton that is the product of individual automata encapsulating different aspects of an environment model, and
(iii) in program synthesis, a product program whose state space is the product of the state spaces of individual component programs.
Modular concept learning can improve the {\em efficiency} of learning since the complexity of several
query-based learning algorithms depends on the size of the concept (e.g. automaton) to
be learned, and, as is well known, this can grow exponentially with the number of components.
Besides improving efficiency of learning, from a software engineering perspective,
modular concept learning also has the
advantage of {\em reuse} of component learning algorithms.

We will focus on the oracle queries given in Table \ref{table:queries} and show several results, including both upper and lower bounds. 
We show learning cross-products from superset queries is no more difficult than learning each individual concept. 
Learning cross-products from equivalence queries or subset queries is intractable, while learning from just membership queries is polynomial, though somewhat expensive. 
We show that when a learning algorithm is allowed to make membership queries and is give a single positive example, previously intractable problems become tractable. 
We show that learning the disjoint unions of sets is easy.
Finally, we discuss the computational complexity of PAC-learning and show how it can be improved when membership queries are allowed. 

\begin{table}
\begin{center}
  \begin{tabularx}{\textwidth}{| c | c | c | X | }
    \hline
    Query Name & Symbol & Complexity & Oracle Definition \\ \hline
    Single Positive Query & $\oneposQ$ & $\oneposC(c)$ & Return a fixed $x \in c^*$ \\ \hline
    Positive Query & $\posQ$ & $\posC(c)$ & Return an $x\in c^*$ that has not yet been given as a positive example (if one exists)\\ \hline
    Membership Query & $\memQ$ & $\memC(c)$ & Given string $s$, return true iff $s \in c^*$ \\ \hline
    Equivalence Query & $\eqQ$ & $\eqC(c)$ & Given $c \in C$, return true if $c=c^*$ otherwise return $x \in (c \backslash c^*) \cup (c^* \backslash c)$\\ \hline 
    Subset Query & $\subQ$ & $\subC(c)$ & Given $c \in C$, return `true' if $c \subseteq c^*$ \mbox{  } otherwise return some $x \in c \backslash c^*$ \\ \hline
    Superset Query & $\supQ$ & $\supC(c)$ & Given $c \in C$, return `true' if $c \supseteq c^*$  otherwise return some $x \in c^* \backslash c$\\ \hline
    Example Query & $\pacQ_\dist$ & $\pacC(c, \dist)$ & Samples $x$ from $\dist$ and returns $x$ with a label indicating whether $x \in c^*$. \\ \hline
  \end{tabularx}
\end{center}
\caption{Types of queries studied in this paper.}
\label{table:queries}
\end{table}

\subsection{A Motivating Example}

    %

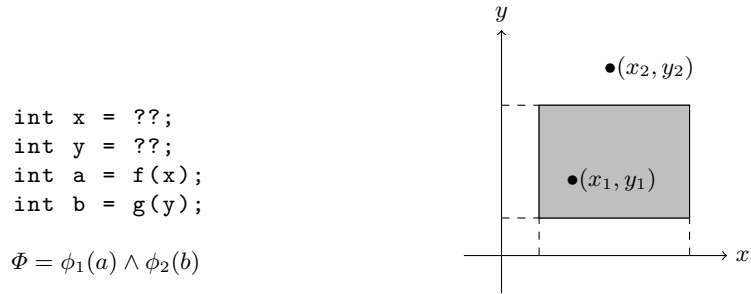
\begin{figure}[h]
\centering
\begin{minipage}[b]{0.4 \textwidth}
\begin{lstlisting}
    int x = ??;
    int y = ??;
    int a = f(x);
    int b = g(y);
    
    $\Phi = \phi_1(a) \wedge \phi_2(b)$
\end{lstlisting}
\end{minipage}
\qquad
\begin{tikzpicture}
     \coordinate (BL) at (0.5,0.5);
     \coordinate (TR) at (2.5,2);
      \draw[->] (-0.5,0) -- (3,0) node[right] {$x$};
      \draw[->] (0,-0.5) -- (0,3) node[above] {$y$};
      \draw [draw=black] (BL) rectangle (TR);
      \filldraw [fill=lightgray, draw=black] (BL) rectangle (TR);
      \draw[dashed] (0,0.5) -- (BL);
      \draw[dashed] (0,2) -- (0.5,2);
      \draw[dashed] (0.5,0) -- (BL);
      \draw[dashed] (2.5,0) -- (2.5,0.5);
      \draw(.75,1) node[anchor = west]{\textbullet $(x_1,y_1)$};
      \draw(1.25,2.5) node[anchor = west]{\textbullet $(x_2,y_2)$};
\end{tikzpicture}
\caption{A simple partial program to be synthesized to satisfy a specification $\Phi$ (left) and the correct set of initial values for $x$ and $y$ (right).}
\label{sketch}
\label{sketchsolutions}
\end{figure}

To illustrate the learning problem, consider the sketching problem given in Figure \ref{sketch}.
Say we wanted to find the set of possible initial values for $x$ and $y$ that can replace the $??$ values so that the program satisfies $\Phi$.

Looking at the structure of this program and specification, we can see that the correctness of these two variables are independent of each other. 
Correct $x$ values are correct independent of $y$ and vice-versa.
Therefore, the set of settings will be the cross product of the acceptable settings for each variable.  
If an oracle can answer queries about correct $x$ or $y$ values separately, then the oracle can simply learn the acceptable values separately and take their Cartesian product. 

      %

If the correct values form intervals, the correct settings will look something like the rectangle shown in Figure \ref{sketchsolutions}.
An algorithm for learning this rectangle can try to simulate learning algorithms for each interval by acting as the oracle for each sublearner. 
For example, if both sublearners need a positive example, the learner can query the oracle for a positive example. 
Given the example $(x_1,y_1)$ as shown in the figure, the learner can then pass $x_1$ and $y_1$ to the sublearners as positive examples. 
However, this does not apply to negative examples, such as $(x_2, y_2)$ in the figure. 
In this example, $x_2$ is in its target interval, but $y_2$ is not. 
The learner has no way of knowing which subconcept a negative element fails on.
Handling negative counterexamples is one of the main challenges of this paper.

\section{Notation}
In the following proofs, we assume we are given concept classes $C_1, C_2, \dots, C_k$ defined over sets $X_1$, $X_2$, \dots, $X_k$. 
Each $c^*_i$ in each $C_i$ is learnable from algorithm $A_i$ (called \emph{sublearners}) using queries to an oracle that can answer any queries in a set $Q$. 
This set $Q$ contains the available types of queries and is a subset of the queries shown in Table \ref{table:queries}.
For example, if $Q := \{ \memQ, \eqQ \}$, then each $A_i$ can make membership and equivalence queries to its corresponding oracle. 
 \todo{Q is almost always a singleton. Should we just call it a query instead of a set?}

For each query $q \in Q$, we say algorithm $A_i$ makes $\genCi{i}$ (or $\genCi{i}(c^*_i)$) many $\genQ$ queries to the oracle in order to learn concept $c^*_i$, dropping the index $i$ when necessary . 
We replace the term $\genC$ with a more specific term when the type of query is specified.
For example, an algorithm $A$ might make $\memC$ many membership queries to learn $c$. 
\todo{What background information, i.e., explanation of queries, should I include, if any?}

Unless otherwise stated, we will assume any index $i$ or $j$ ranges over the set $\{ 1 \dots k \}$.
We write $\prod S_i$ or $S_1 \times \dots \times S_k$ to refer to the $k$-ary Cartesian product (i.e., cross-product) of sets $S_i$. 
We use $S^k$ to refer to $\prod_{i=1}^k S$. 

We use vector notation $\vec{x}$ to refer to a vector of elements $(x_1,\dots, x_k)$, $\vec{x}[i]$ to refer to $x_i$, and $\vec{x}[i \leftarrow x'_i]$ to refer to $\vec{x}$ with $x'_i$ replacing value $x_i$ at position $i$. 
We define $\Boxtimes^k_{i=1} C_i := \{ \prod c_i \mid c_i \in C_i, i \in \{1,\dots,k\} \}$. 
We write $\vec{c}$ or $\prod c_i$ for any element of $\Boxtimes^k_{i=1} C_i $ and will often denote $\vec{c}$ by $(c_1, \dots, c_k)$ in place of $\prod c_i$. 
The target concept will be represented as $c^*$ or $\targ$ which equals $(c^*_1, \dots, c^*_k)$.

\todo{mention representations}

The results below answer the following question:
\begin{quote}
For different sets of queries $Q$, what is the bound on the number of queries to learn a concept in $\Boxtimes C_i $ as a function of each $\genCi{i}$ for each $q \in Q$?
\end{quote}

The proofs in this paper make use of the following simple observation:
\begin{observation}
\label{subobs}
For sets $S_1, S_2, \dots, S_k$ and $T_1, T_2, \dots, T_k$, assume $\prod S_i \ne \emptyset$.  Then $\prod S_i \subseteq \prod T_i$ if and only if $S_i \subseteq T_i$, for all $i$.
\end{observation}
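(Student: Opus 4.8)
The plan is to prove the two directions of the biconditional separately, observing that only the forward direction actually consumes the nonemptiness hypothesis.

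For the backward direction ($\Leftarrow$), which is the routine half, I would assume $S_i \subseteq T_i$ for every $i$ and take an arbitrary $\vec{x} \in \prod S_i$. By the definition of the Cartesian product, $\vec{x}[i] \in S_i$ for each $i$, and since $S_i \subseteq T_i$ this gives $\vec{x}[i] \in T_i$ for each $i$, whence $\vec{x} \in \prod T_i$. As $\vec{x}$ was arbitrary, $\prod S_i \subseteq \prod T_i$. This direction makes no use of nonemptiness.

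For the forward direction ($\Rightarrow$), I would assume $\prod S_i \subseteq \prod T_i$, fix an arbitrary index $i$, and take an arbitrary $s \in S_i$, aiming to conclude $s \in T_i$. This is exactly where the assumption $\prod S_i \neq \emptyset$ is used: it forces every factor $S_j$ to be nonempty, so for each $j \neq i$ I may select a witness $s_j \in S_j$. I then form the vector $\vec{x}$ with $\vec{x}[i] = s$ and $\vec{x}[j] = s_j$ for all $j \neq i$, which lies in $\prod S_i$ by construction. Invoking the containment $\prod S_i \subseteq \prod T_i$ gives $\vec{x} \in \prod T_i$, and reading off coordinate $i$ yields $s = \vec{x}[i] \in T_i$. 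Since $s$ and $i$ were arbitrary, $S_i \subseteq T_i$ for all $i$.

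The only genuinely delicate point — and the reason the hypothesis is stated at all — is the witness construction in the forward direction. I would flag it explicitly: without nonemptiness, a single empty factor would make $\prod S_i$ empty and therefore vacuously contained in $\prod T_i$, while the remaining factors could violate $S_j \subseteq T_j$ arbitrarily, so the coordinate-projection argument would have no legitimate vector to project. Ruling out this degenerate case is precisely what the assumption accomplishes, and it is the whole content of the ``main obstacle'' here.
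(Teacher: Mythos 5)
Your proof is correct, and since the paper states this observation without any proof at all (it is offered as a ``simple observation''), there is no authorial argument to diverge from; yours is exactly the canonical one the authors implicitly rely on. You also correctly isolate the one substantive point — that the nonemptiness hypothesis is consumed only in the forward direction, to supply witness coordinates $s_j \in S_j$ for $j \ne i$, and that without it an empty factor would make $\prod S_i$ vacuously contained in $\prod T_i$ while other factors violate the componentwise inclusion.
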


\section{Simple Lower Bounds}
This section introduces some fairly simple lower bounds.
We will start with a lower-bound on learnability from positive examples. 

\begin{proposition}
There exist concepts $C_1$ and $C_2$ that are each learnable from constantly many positive queries, such that $C_1 \times C_2$ is not learnable from any number of positive queries. 
\end{proposition}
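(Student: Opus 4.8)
The plan is to exploit the one-sided nature of positive queries: they only ever exhibit elements of the target, so the only way a positive-query learner can ever certify that a concept is \emph{not} larger is to drive the oracle to exhaustion, which is possible only for finite concepts. I will therefore make each component finite (so that it is learnable by exhausting its positive examples in a constant number of queries) while arranging the cross-product class to contain an infinite concept sitting strictly above another, so that exhaustion, and hence disambiguation, never occurs.

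Concretely, I would take $X_1 = \{0,1\}$ with $C_1 = \{\{0\},\{0,1\}\}$, and $X_2 = \mathbb{N}$ with $C_2 = \{\mathbb{N}\}$. First I would verify learnability of the components. The class $C_2$ is a singleton class, so its learner outputs $\mathbb{N}$ after $0$ queries. For $C_1$, the learner issues positive queries until either it is told that no fresh positive example remains or it has seen the element $1$; the former forces the target to be $\{0\}$ and the latter forces it to be $\{0,1\}$, so at most two positive queries suffice no matter how the oracle chooses its answers. (Here I rely on the reading of Table~\ref{table:queries} in which a positive query returns a distinguished ``no further example'' token once the target has been exhausted; I will state this convention explicitly, since the separation turns on it.)

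The heart of the argument is the failure of the product class $\Boxtimes^2_{i=1} C_i = \{\,\{0\}\times\mathbb{N},\ \{0,1\}\times\mathbb{N}\,\}$. By Observation~\ref{subobs} these two concepts are nested, $\{0\}\times\mathbb{N} \subsetneq \{0,1\}\times\mathbb{N}$, and crucially both are infinite. I would then run an adversary argument: suppose some algorithm $A$ learns every concept in the product class using only positive queries. Run $A$ with an oracle that answers the $n$-th positive query by the fresh pair $(0,n)$. Every such answer lies in \emph{both} product concepts and is never repeated, so the same transcript is a legal run of $A$ against either target. Since $A$ sees identical answers in both scenarios, it halts after finitely many queries and outputs the same hypothesis $h$; choosing the true target to be whichever of the two nested concepts differs from $h$ yields a valid run on which $A$ is wrong, contradicting its correctness. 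Hence no number of positive queries suffices.

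The step I expect to be the main obstacle, and really the only delicate point, is pinning down the oracle's behavior at exhaustion, because the two halves of the proof pull in opposite directions: the component learner for $C_1$ \emph{relies} on detecting exhaustion to certify the smaller set $\{0\}$, whereas the product learner \emph{never} reaches exhaustion since the relevant product concepts are infinite. Making the ``no further example'' convention explicit reconciles the two and keeps the construction honest; with that convention fixed, both the component bounds and the adversary argument are routine.
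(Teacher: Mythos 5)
Your proof is correct and takes essentially the same approach as the paper's: a two-concept finite class whose smaller member is certified by exhausting the supply of fresh positive examples, crossed with an infinite class so that exhaustion never occurs in the product, together with an adversary that forever returns fresh examples lying in both nested product concepts (the paper uses $C_1=\{\{a\},\{a,b\}\}$, $C_2=\{\mathbb{N},\mathbb{Z}\setminus\mathbb{N}\}$, and examples drawn from $\{a\}\times\mathbb{N}$, exactly paralleling your $(0,n)$ answers). Your explicit ``no further example'' convention is just a careful reading of the ``(if one exists)'' clause in Table~\ref{table:queries} that the paper's component learners rely on implicitly.
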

\begin{proof}
Let $C_1 := \{ \{a\}, \{a,b\} \}$ and set $C_2 := \{ \mathbb{N}, \mathbb{Z} \backslash \mathbb{N} \}$. \todo{double check $:=$ is used for defining everywhere}
To learn the set in $C_1$, pose two positive queries to the oracle, and return $\{a,b\}$ if and only if both $a$ and $b$ are given as positive examples. 
To learn $C_2$, pose one positive query to the oracle and return $\mathbb{N}$ if and only if the positive example is in $\mathbb{N}$. 
An adversarial oracle for $C_1 \times C_2$ could give positive examples only in the set $\{a\} \times \mathbb{N}$. 
Each new example is technically distinct from previous examples, but there is no way to distinguish between the sets $\{a\}\times \mathbb{N}$ and $\{a,b\} \times \mathbb{N}$ from these examples. 
\end{proof}

Now we will show lower bounds on learnability from $\eqQ$, $\subQ$, and $\memQ$. 
We will see later that this lower bound is tight when learning from membership queries, but not equivalence or subset queries.

\begin{proposition}
There exists a concept $C$ that is learnable from $\genC$ many queries posed to $Q \subseteq \{ \memQ, \eqQ, \subQ \}$ such that learning $C^k$ requires $(\genC)^k$ many queries.   \todo{Should I explicitly handle infinite and finite cases separately? Should I include bigO notation on the infinite case?}
\end{proposition}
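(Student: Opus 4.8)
The plan is to exhibit a single concept class $C$---the class of singletons over a finite domain---for which all three query types behave identically, so the lower bound can be argued uniformly. Fix a finite set $X$ with $|X| = n$ and let $C := \{ \{x\} \mid x \in X \}$. The key structural fact is that $C^k$ is exactly the class of singletons over $X^k$: since $\{x_1\} \times \dots \times \{x_k\} = \{(x_1,\dots,x_k)\}$, the products range over all $n^k$ singletons of $X^k$. I will set $\genC := n-1$ and show (i) $C$ is learnable from $n-1$ queries for any $Q \subseteq \{\memQ,\eqQ,\subQ\}$, and (ii) learning $C^k$ forces at least $(n-1)^k = (\genC)^k$ queries.

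For the upper bound, a sublearner identifies the target singleton $\{x^*\}$ by probing points of $X$ one at a time. With membership queries it asks whether $x \in c^*$; with subset queries it asks whether $\{x\} \subseteq c^*$; with equivalence queries it conjectures $\{x\}$. In every case a positive or ``true'' answer reveals $c^* = \{x\}$ immediately, while a negative answer rules out exactly the candidate $\{x\}$. After at most $n-1$ negative answers only one candidate survives, so $\genC = n-1$ suffices (and is necessary, by the adversary below specialized to $k=1$).

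For the lower bound I will run an adversary argument against any learner for $C^k$. The adversary maintains the set $L$ of singletons of $X^k$ still consistent with its past answers, initially all $n^k$ of them. Any query of the three types is determined by a single point $\vec{y} \in X^k$: a membership query at $\vec{y}$, a subset query with $c = \{\vec{y}\}$, or an equivalence query with conjecture $\{\vec{y}\}$. As long as $|L| \ge 2$, the adversary answers negatively and, when a counterexample is demanded, returns $\vec{y}$ itself. This is consistent with every surviving candidate except $\{\vec{y}\}$: for any $\{\vec{z}\} \in L$ with $\vec{z} \ne \vec{y}$ we have $\vec{y} \notin \{\vec{z}\} = c^*$, so ``no'' is the correct membership answer and $\vec{y} \in c \backslash c^*$ is a legal subset and equivalence counterexample. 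Hence each query removes at most the one candidate $\{\vec{y}\}$ from $L$, and the learner cannot be certain of the target until $|L| = 1$, which requires at least $n^k - 1$ queries.

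Finally, since $n^k - 1 \ge (n-1)^k$ for all $n \ge 2$ and $k \ge 1$ (for instance, by the mean value theorem $n^k - (n-1)^k \ge k(n-1)^{k-1} \ge 1$), learning $C^k$ requires at least $(n-1)^k = (\genC)^k$ queries, as claimed. The main obstacle is exactly this lower-bound step: the argument must produce one adversary that simultaneously defeats membership, subset, and equivalence queries (and any mixture of them, when $Q$ contains more than one type). It works here precisely because on singletons each query targets a single point $\vec{y}$ and can always be answered negatively with $\vec{y}$ as a uniform, always-legal counterexample, so no single response can ever eliminate more than one candidate.
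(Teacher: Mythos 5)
Your proposal is correct and follows essentially the same route as the paper: the class of singletons over a finite domain, an adversary that answers every membership, subset, or equivalence query about a point $\vec{y}$ negatively with $\vec{y}$ itself as the counterexample, so each query eliminates at most one of the $n^k$ candidate singletons in the product space. Your write-up is in fact somewhat more careful than the paper's, making explicit both the adversary's consistency with all surviving candidates and the arithmetic step $n^k - 1 \ge (n-1)^k$ that the paper leaves implicit.
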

\begin{proof}
Let $C := \{ \{j\} \mid j \in \{0 \dots m\} \}$. 

We can learn $C$ in $m$ membership, subset, or equivalence queries by querying $j \in c^*$, $\{ j \} \subseteq c^*$, or $\{j\} = c^*$, respectively. 

However, a learning algorithm for $C^k$ requires more than $m^k$ queries. 
To see this, note that  $C^k$ contains all singletons in a space of size $(m+1)^k$. 

So for each subset query $\{x\} \subseteq c^*$, if $\{j\} \ne c^*$, the oracle will return $j$ as a counterexample, giving no new information.  
Likewise, for each equivalence query $\{j\} = c^*$, if $\{j\} \ne c^*$, the oracle can return $j$ as a counterexample.
Therefore, any learning algorithm must query $x \in c^*$, $\{ x \} \subseteq c^*$, or $\{x\} = c^*$ for $(m+1)^k - 1$ values of $x$.
\end{proof}

\section{Learning From Superset Queries}This section introduces arguably the simplest positive result of the paper: when using superset queries, learning cross-products of concepts is as easy as learning the individual concepts. 

Like all positive results in this paper, this is accomplished by algorithm that takes an oracle for the cross-product concept $\prod c_i^*$ and simulates the learning process for each sublearner $A_i$ by acting as an oracle for each such sublearner. 

\todo{make sure subconcept sublearner, etc, is defined}

\todo{should I give more intuition? It's all in the proof fwiw}




\begin{proposition}

If $Q = \{ \supQ \}$, then there is an algorithm that learns any concept $\targ \in \prod C_i $ in $\sum \supCi{i}(c^*_i)$ queries.  
\end{proposition}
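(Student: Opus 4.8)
The plan is to follow the simulation template described above: run each sublearner $A_i$ and answer its superset queries by translating each one into a single superset query about the product concept $\targ$. The guiding idea is that a $\supQ$ query returns a \emph{positive} counterexample $\vec{x} \in c^* \setminus c$, and the fact that $\vec{x} \in c^*$ gives information about \emph{every} coordinate at once (namely $\vec{x}[i] \in c^*_i$ for all $i$), while the failure $\vec{x} \notin c$ can be pinned down to the single coordinate we care about provided we pad the remaining coordinates with the full domain.

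Concretely, when $A_i$ poses the superset query $c_i$ (asking whether $c_i \supseteq c^*_i$), I would pose to the product oracle the query $\vec{d}$ with $\vec{d}[i] = c_i$ and $\vec{d}[j] = X_j$ for $j \ne i$. Since each $X_j \supseteq c^*_j$ trivially, Observation \ref{subobs} (applied to $c^* \subseteq \vec{d}$, using that $c^*$ is nonempty) shows that $\vec{d} \supseteq c^*$ if and only if $c_i \supseteq c^*_i$. Hence if the oracle answers `true' I pass `true' back to $A_i$. Otherwise the oracle returns some $\vec{x} \in c^* \setminus \vec{d}$; because $\vec{x}[j] \in X_j = \vec{d}[j]$ for every $j \ne i$, the only way to have $\vec{x} \notin \vec{d}$ is $\vec{x}[i] \notin c_i$, and since $\vec{x} \in c^*$ we also have $\vec{x}[i] \in c^*_i$. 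Thus $\vec{x}[i] \in c^*_i \setminus c_i$ is a valid counterexample to hand back to $A_i$. In either case the product oracle faithfully simulates $A_i$'s superset oracle using exactly one query.

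Running the sublearners one after another, each $A_i$ terminates having correctly learned $c^*_i$ using $\supCi{i}(c^*_i)$ superset queries, each of which I answer with a single product query; summing over $i$ gives the claimed bound $\sum \supCi{i}(c^*_i)$, and the algorithm outputs $\prod c^*_i = \targ$. I expect the only delicate point to be the counterexample localization, i.e., guaranteeing that a returned counterexample fails at coordinate $i$ and nowhere else, which is precisely what padding the other coordinates with $X_j$ buys us (and is the feature of superset queries that the analogous membership, subset, and equivalence arguments lack, as the lower bounds above already suggest). The one mild assumption this leans on is that the padding concept $X_j$ is available as a query component, which is the natural reading of the product oracle, and the nonemptiness of $c^*$ required by Observation \ref{subobs} holds for any nonempty target.
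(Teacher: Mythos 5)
Your core idea---that a superset counterexample is a \emph{positive} example, so it certifies $\vec{x}[i] \in c^*_i$ for every coordinate while the failure can be localized---is exactly the insight the paper exploits. But your padding step has a genuine gap in the paper's setting. A superset query must be a concept from the class being learned (Table \ref{table:queries}: ``Given $c \in C$''), and here that class is $\Boxtimes^k_{i=1} C_i$, so every legal query has the form $\prod c_i$ with each $c_i \in C_i$. Your query $\vec{d}$ with $\vec{d}[j] = X_j$ for $j \ne i$ is therefore admissible only if the full domain $X_j$ happens to be a concept of $C_j$. That assumption is not made anywhere, and it fails for classes appearing in this very paper, e.g.\ the singleton class $C = \{ \{j\} \mid j \in \{0,\dots,m\} \}$ from the lower-bound section, for which no padded query of your form can be posed at all. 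So the assumption you flag as ``mild'' is in fact a substantive restriction rather than the natural reading of the product oracle.

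The paper's proof (Algorithm \ref{supalg}) avoids this entirely: it runs all $k$ sublearners in parallel and queries $\prod S_i$, where each $S_i$ is the \emph{current pending query} of $A_i$ (hence in $C_i$, making the product a legal query). A `yes' answer settles all $k$ pending queries at once via Observation \ref{subobs}; a `no' yields $\vec{x} \in \targ \setminus \prod S_i$, and scanning coordinates finds some $x_i \notin S_i$ with $x_i \in c^*_i$---the same localization you perform, except the counterexample need only be pinned to \emph{some} coordinate rather than a predetermined one. Every oracle query answers at least one sublearner query, giving the $\sum \supCi{i}(c^*_i)$ bound. A second, smaller omission: your appeal to Observation \ref{subobs} requires $\targ \ne \emptyset$, and when $\emptyset \in C_i$ for some $i$ the target can be empty, in which case every superset query is answered `true' and your sequential scheme may feed a sublearner answer sequences consistent with no concept in its class; the paper dispatches this with one initial query $\emptyset \supseteq \targ$ (legal precisely when some $C_i$ contains $\emptyset$). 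If you adopt the parallel-query formulation, both issues disappear and your localization argument carries over intact.
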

\begin{proof}
Algorithm \ref{supalg} learns $\Boxtimes C_i $ by simulating the learning of each $A_i$ on its respective class $C_i$. 
The algorithm asks each $A_i$ for superset queries $S_i \supseteq c_i^*$, queries the product $\prod S_i$ to the oracle, and then uses the answer to answer at least one query to some $A_i$. \todo{should there be a special symbol for queries instead of just $\supseteq$}
Since at least one $A_i$ receives an answer for each oracle query, at most $\sum \supCi{i}(c^*_i)$ queries must be made in total.

We will now show that each oracle query results in at least one answer to an $A_i$ query (and that the answer is correct). 
The oracle first checks if the target concept is empty and stops if so.
If no concept class contains the empty concept, this check can be skipped. 
At each step, the algorithm poses query $\prod S_i$ to the oracle. 
If the oracle returns 'yes' (meaning $\prod S_i \supseteq \targ$), then  $S_i \supseteq c_i^*$ for each $i$ by Observation \ref{subobs}, so the oracle answers 'yes' to each $A_i$. 
If the oracle returns 'no', it will give a counterexample $\vec{x} = (x_1,\dots,x_k) \in \targ \backslash \prod S_i$. 
There must be at least one $x_i \not\in S_i$ (otherwise, $\vec{x}$ would be in $\prod S_i$). 
So the algorithm checks $x_j \in S_j$ for all $x_j$ until an $x_i \not\in S_i$ is found. 
Since $\vec{x} \in \targ$, we know $x_i \in c_i^*$, so $x_i \in c_i^* \backslash S_i$, so the oracle can pass $x_i$ as a counterexample to $A_i$. 

Note that once $A_i$ has output a correct hypothesis $c_i$, $S_i$ will always equal $c_i$, so counterexamples must be taken from some $j \ne i$. 
\end{proof}

\begin{algorithm}[H]
\label{supalg}
\SetAlgoLined
\KwResult{Learn $\prod C_i $ from Superset Queries}
\If{$\emptyset \in C_i$ for some $i$}{
	Query $\emptyset \supseteq \targ$\;
	\If{$\emptyset \supseteq \targ$}{
		\Return{$\emptyset$}
	}
}
\For{$i = 1 \dots k$}{
	Set $S_i$ to initial subset query from $A_i$
}
 \While{Some $A_i$ has not completed}{
  Query $\prod S_i$ to oracle\;
  \eIf{$\prod S_i \supseteq c^*$ }{
   Answer $S_i \supseteq c_i^*$ to each $A_i$\;
   Update each $S_i$ to new query\;
   }{
 	Get counterexample $\vec{x} = (x_1,\dots,x_k)$
   	\For{i = 1 \dots k}{
   		\If{$x_i \not\in S_i$}{
			Pass counterexample $x_i$ to $A_i$\;
			Update $S_i$ to new query\;
			} 
  		}
 	}
	\For{i = 1 \dots k}{
		\If{$A_i$ outputs $c_i$}{
			Set $S_i := c_i$\;
		}
	}
 }
 \Return{$\prod c_i$}\;
 \caption{Algorithm for learning from Subset Queries}
\end{algorithm}

\section{Learning From Membership Queries and One Positive Example}
Ideally, learning the cross-product of concepts should be about as easy as learning all the individual concepts.
The last section showed this is not the case when learning with equivalence, subset, or membership queries.
However, when the learner is given a single positive example and allowed to make membership queries, the number of queries becomes tractable. 
This is due to the following simple observation.

\begin{observation}
\label{posobs}
Fix sets $S_1, S_2, \dots, S_k$, points $x_1, x_2, \dots, x_k$ and an index $i$. 
If $x_j \in S_j$ for all $j \ne i$, then $(x_1, x_2, \dots, x_k) \in \prod S_i$ if and only if $x_i \in S_i$.
\end{observation}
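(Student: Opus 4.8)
The plan is to argue directly from the definition of the $k$-ary Cartesian product, since the statement is essentially a restatement of that definition once the hypothesis is incorporated. Recall that $(x_1, \dots, x_k) \in \prod S_j$ holds precisely when $x_j \in S_j$ for \emph{every} $j \in \{1, \dots, k\}$. So the membership condition for the tuple is a conjunction of $k$ coordinatewise conditions, one per index.

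First I would split this conjunction into the conditions at indices $j \ne i$ together with the single condition at index $i$. The hypothesis of the observation is exactly that all of the former hold, i.e., $x_j \in S_j$ for all $j \ne i$. Hence the full conjunction is true if and only if its one remaining conjunct, $x_i \in S_i$, is true, which is precisely the claimed equivalence.

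Spelling out the two directions for completeness: in the forward direction, if $(x_1, \dots, x_k) \in \prod S_j$ then by projecting onto coordinate $i$ we get $x_i \in S_i$; in the backward direction, if $x_i \in S_i$, then combining this with the hypothesis yields $x_j \in S_j$ for all $j$, so the tuple lies in the product. I do not anticipate any genuine obstacle here, as the result is immediate from the definition. The only point that warrants any care is keeping the fixed index $i$ notationally distinct from the product's running index (which I would rename to $j$), so that the appeal to the hypothesis $x_j \in S_j$ for $j \ne i$ remains unambiguous.
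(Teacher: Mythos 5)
Your proof is correct: the paper states this observation without proof, treating it as immediate, and your argument---unfolding membership in the product as a conjunction of coordinatewise conditions and discharging all conjuncts at $j \ne i$ via the hypothesis---is exactly the intended (and essentially only) justification. Both directions are handled cleanly, so there is nothing to add.
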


So, given a positive example $\vec{p}$,  we can see that $\vec{p}[j \leftarrow x_j] \in \targ$ if and only if $x_j \in c^*_j$.
This fact is used to learn using subset or equivalence queries with the addition of membership queries and a positive example.
The algorithm is fairly similar for equivalence and subset queries, and is shown as a single algorithm in Algorithm \ref{lineqalg}.

\begin{proposition}
If $Q \in \{\{\subQ\}, \{\eqQ\}\}$ and a single positive example $\vec{p} \in \targ$ is given, then $\targ$ is learnable in $\sum \genCi{i}$ queries from $Q$ (i.e., subset or equivalence queries) and $k \cdot \sum \genCi{i}$ membership queries. 
\end{proposition}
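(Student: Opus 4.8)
The plan is to run the $k$ sublearners $A_1,\dots,A_k$ concurrently, maintaining for each $i$ the query set $S_i$ that $A_i$ is currently awaiting an answer to, and to answer all of these pending queries at once by posing a single query on the product $\prod S_i$ to the oracle for $\targ$, exactly as in the proof for superset queries. The engine that converts answers about $\targ$ into answers about an individual $c^*_i$ is Observation \ref{posobs}: since $\vec{p}\in\targ$ witnesses $p_j\in c^*_j$ for every $j$, the single membership query $\vec{p}[m\leftarrow x_m]\in\targ$ decides whether $x_m\in c^*_m$. This is exactly what lets us resolve the difficulty flagged in the motivating example, namely determining which component a negative example actually violates.

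For the subset case ($Q=\{\subQ\}$) I would pose $\prod S_i\subseteq\targ$. By Observation \ref{subobs} a `yes' means $S_i\subseteq c^*_i$ for every $i$, so I return `yes' to all pending sublearners (handling any empty $S_i$ trivially, or by substituting the placeholder $\{p_i\}$ to keep the product nonempty). A `no' returns $\vec{x}\in\prod S_i\setminus\targ$, so $x_j\in S_j$ for all $j$ while $\vec{x}\notin\targ$; I then probe coordinates with the membership queries $\vec{p}[m\leftarrow x_m]\in\targ$ until the first `false', which identifies an $m$ with $x_m\in S_m\setminus c^*_m$, a legal subset counterexample to hand back to $A_m$.

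For the equivalence case ($Q=\{\eqQ\}$) the query is $\prod S_i=\targ$; a `yes' forces $S_i=c^*_i$ for all $i$ (two applications of Observation \ref{subobs}, using $\targ\ne\emptyset$), so every sublearner finishes. A `no' returns $\vec{x}$ in the symmetric difference, and I split on whether $\vec{x}\in\prod S_i$, which is checkable directly against my own sets. If $\vec{x}\in\prod S_i\setminus\targ$ I locate the offending coordinate exactly as above with membership queries, yielding $x_m\in S_m\setminus c^*_m$; if instead $\vec{x}\in\targ\setminus\prod S_i$, then $x_j\in c^*_j$ for all $j$, and I simply scan for the first $m$ with $x_m\notin S_m$ (no oracle call needed), yielding $x_m\in c^*_m\setminus S_m$. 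Either way one sublearner receives a valid counterexample and advances, and both variants collapse into the single Algorithm \ref{lineqalg}.

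The counting then gives both bounds at once: every oracle query either answers all pending sublearners with `yes' or delivers a counterexample that advances at least one sublearner, so the number of $Q$-queries is at most the total number of answers the sublearners consume, $\sum\genCi{i}$; and since each oracle query triggers at most $k$ membership probes, the membership total is at most $k\cdot\sum\genCi{i}$. I expect the main obstacle to be the bookkeeping around the `no' case: arguing that the coordinate $m$ found always belongs to a sublearner that is genuinely not yet finished (a finished $A_i$ has $S_i=c^*_i$ and so can never be the violated coordinate, mirroring the closing remark of the superset proof), and checking that the placeholder and empty-set edge cases in the subset variant do not invalidate the appeals to Observation \ref{subobs}.
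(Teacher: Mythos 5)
Your proposal is correct and takes essentially the same approach as the paper's proof: simulate the sublearners on pending queries $S_i$, pose the product query $\prod S_i$, use Observation \ref{subobs} to broadcast a `yes', use the positive example $\vec{p}$ with at most $k$ membership probes (Observation \ref{posobs}) to locate the violated coordinate of a counterexample in $\prod S_i \setminus \targ$, handle the $\targ \setminus \prod S_i$ equivalence case by scanning for $x_m \notin S_m$ without oracle calls, and charge each oracle query to at least one advanced sublearner to obtain both bounds. Your explicit handling of the empty-$S_i$ edge case (needed because Observation \ref{subobs} assumes a nonempty product) is a small refinement the paper leaves implicit.
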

\begin{proof}
The learning process for either subset or equivalence queries is described in Algorithm \ref{lineqalg}, with differences marked in comments. 
In either case, once the correct $c_j$ is found for any $j$, $S_j$ will equal $c_j$ for all future queries, so any counterexamples must fail on an $i \ne j$. 

We separately show for each type of query that a correct answer is given to at least one learner $A_i$ for each subset (resp. equivalence) query to the cross-product oracle. 
Moreover, at most $k$ membership queries are made per subset (resp. equivalence) query, yielding the desired bound. 

Subset Queries:
For each subset query $\prod S_i \subseteq \targ$, the algorithm either returns `yes' or gives a counterexample $\vec{x} = (x_1, \dots, x_k) \in \prod S_i \backslash \targ$. 
If the algorithm returns 'yes', then by Observation \ref{subobs} $S_i \subseteq c^*_i$ for all $i$, so the algorithm can return 'yes' to each $A_i$. 
Otherwise, $\vec{x} \not\in \targ$, so there is an $i$ such that $x_i \not\in c^*_i$. 
By Observation \ref{posobs} the algorithm can query $\vec{p}[j \leftarrow x_j]$ for all $j$ until the $x_i \not\in c^*_i$ is found. 

Equivalence Queries:
For each equivalence query $\prod S_i = \targ$, the algorithm either returns 'yes', or gives a counterexample $\vec{x} = (x_1, \dots, x_k)$.
If the algorithm returns `yes', then a valid target concept is learned. 
Otherwise, either $\vec{x} \in \prod S_i \backslash \targ$ or $\vec{x} \in \targ \backslash \prod S_i$.
In the first case, as with subset queries,  the algorithm uses $k$ membership queries to query $\vec{p}[j \leftarrow x_j]$ for all $j$.
Once the $x_i \not\in c^*_i$ is found it is given to $A_i$ as a counterexample.
In the second case, as with superset queries, the algorithm checks if $x_j \in S_j$ for all $j$ until the $x_i \not\in c^*_i$ is found and given to $A_i$.
\end{proof}

\begin{algorithm}[H]
\label{lineqalg}
\SetAlgoLined
\KwResult{Learn $\targ \in \Boxtimes C_i $}
\For{$i = 1 \dots k$}{
	Set $S_i$ to initial query from $A_i$
}
 \While{Some $A_i$ has not completed}{
        Query $\prod S_i$ to oracle\;
        \eIf{The Oracle returns `yes'}{
        		Pass `yes' to each $A_i$\;
		\tcp{If Q = \{EQ\} each sublearner will immediately complete}
        }{        
             	Get counterexample $\vec{x} = (x_1,\dots,x_k)$\;
            	\eIf{$\vec{x} \in \targ \backslash \prod S_i$}{ \tcp{Only happens if Q = \{EQ\}}
            		\For{i = 1 \dots k}{
               			\If{$x_i \not\in S_i$}{
            			Pass counterexample $x_i$ to $A_i$\;
            			Update $S_i$ to new query from $A_i$\;
            			}
              		}
             	}{
            		\For{i = 1 \dots k}{
            			Query $\vec{p}[i \leftarrow x_i] \in \targ$\;
            			\If{$\vec{p}[i \leftarrow x_i] \not\in \targ$ and $x_i \in S_i$}{
            				Pass counterexample $x_i$ to $A_i$\;
            				Update $S_i$ to new query from $A_i$\;
            			}
            		}
            	
            	}
	}
}
Each $A_i$ returns some $c_i$\;
Return $\prod c_i$\;
\caption{Algorithm for learning from Equivalence (or Subset)  Queries, Membership Queries, and One Positive Example}
\end{algorithm}

Finally, learning from only membership queries and one positive example if fairly easy. 

\begin{proposition}
\label{memandpos}
If $Q = \{\memQ\}$ and a single positive example $\vec{p} \in \targ$ is given, then $\targ$ is learnable in $k \cdot \sum \memCi{i}(c^*_i)$ membership queries. 
\end{proposition}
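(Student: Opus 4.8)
The plan is to run each sublearner $A_i$ to completion while the cross-product learner plays the role of $A_i$'s membership oracle, and then return $\prod c_i$ from the hypotheses the $A_i$ produce. Since $Q = \{\memQ\}$, every sublearner only ever asks queries of the form ``is $x_i \in c^*_i$?'', so the entire task reduces to answering these single-coordinate membership queries using the cross-product oracle, which can only answer ``is $\vec{x} \in \targ$?''.

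The key step is to answer one such query with exactly one cross-product query. When $A_i$ asks about a point $x_i$, I would query $\vec{p}[i \leftarrow x_i]$ to the oracle. Because $\vec{p} \in \targ$, we have $p_j \in c^*_j$ for every $j \ne i$, so applying Observation \ref{posobs} with $S_j = c^*_j$ gives $\vec{p}[i \leftarrow x_i] \in \targ$ if and only if $x_i \in c^*_i$. Thus the oracle's reply is precisely the answer $A_i$ expects, and I pass it back unchanged; one cross-product membership query correctly simulates one sublearner membership query.

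For the count, running the sublearners one after another issues $\sum_i \memCi{i}(c^*_i)$ cross-product membership queries in total, which is within the stated $k \cdot \sum_i \memCi{i}(c^*_i)$ bound; correctness of the returned $\prod c_i$ follows from the correctness of each $A_i$ on $c^*_i$. The reason this is the simplest of the positive results is that the obstacle highlighted by the motivating example---that a negative example of $\targ$ does not reveal which coordinate it fails on---never arises here: pinning the remaining coordinates to the known-good values $p_j$ forces every query to probe a single $c^*_i$. I therefore expect no real difficulty; the only point requiring care is verifying that Observation \ref{posobs} applies with $S_j = c^*_j$, and noting that because each simulated query costs a single oracle query, the tighter bound $\sum_i \memCi{i}(c^*_i)$ in fact holds and immediately implies the claimed one.
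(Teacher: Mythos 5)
Your proof is correct and takes essentially the same route as the paper's: both simulate each sublearner $A_i$ in sequence and answer each of its membership queries $x_i$ with the single oracle query $\vec{p}[i \leftarrow x_i] \in \targ$, justified by Observation \ref{posobs}. Your observation that this actually yields the tighter bound $\sum_i \memCi{i}(c^*_i)$ is also valid; the paper's stated $k \cdot \sum_i \memCi{i}(c^*_i)$ is simply a looser bound on the same algorithm.
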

\begin{proof}
The algorithm learns by simulating each $A_i$ in sequence, moving on to $A_{i+1}$ once $A_i$ returns a hypothesis $c_i$. 
For any membership query $M_i$ made by $A_i$, $M_i \in c^*_i$ if and only if $\vec{p}[i \leftarrow M_i]\in \targ$ by Observation \ref{posobs}. 
Therefore the algorithm is successfully able to simulate the oracle for each $A_i$, yielding a correct hypothesis $c_i$. 
\end{proof}

\section{Learning From Only Membership Queries}
We have seen that learning with membership queries can be made significantly easier if a single positive example is given. 
In this section we describe a learning algorithm using membership queries when no positive example is given. 
This algorithm makes $O(max_i \{ \memCi{i}(c^*_i) \}^k)$ queries, matching the lower bound given in a previous section. 

For this algorithm to work, we need to assume that $\emptyset \not\in C_i$ for all $i$.
If not, there is no way to distinguish between an empty and non-empty concept. 
For example consider the classes $C_1 = \{ \{1\}, \emptyset \}$ and $C_2 = \{ \{j \} \mid j \in \mathbb{N} \}$. 
It is easy to know when we have learned the correct class in $C_1$ or in $C_2$ using membership queries. 
However, learning from their cross-product is impossible. 
For any finite number of membership queries, there is no way to distinguish between the sets $\emptyset$ and $\{(1,j)\}$ for some $j$ that has yet to be queried.

The main idea behind this algorithm is that learning from membership queries is easy once a single positive example is found. 
So the algorithm runs until a positive example is found from each concept or until all concepts are learned. 
If a positive example is found, the learner can then run the simple algorithm from Proposition \ref{memandpos} for learning from membership queries and a single positive example.

\begin{proposition}
Algorithm \ref{memonlyalg} will terminate after making $O(max_i \{ \memCi{i}(c^*_i) \}^k)$ queries.
\end{proposition}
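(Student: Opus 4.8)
The plan is to exploit the two-phase structure sketched just before the statement: first locate a single positive example $\vec{p} \in \targ$, and then hand it to the procedure of Proposition~\ref{memandpos}, which finishes in $k \cdot \sum_i \memCi{i}(c^*_i)$ further membership queries. Writing $M := \max_i \memCi{i}(c^*_i)$, this second phase costs $O(k^2 M)$, which is $O(M^k)$ for fixed $k$, so the entire bound reduces to showing that the first phase locates a positive example within $O(M^k)$ queries. Termination of the whole procedure is then immediate, since each phase issues only finitely many queries.

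The key idea I would formalize is that each sublearner can be mined for a short list of candidate positive elements \emph{without spending any oracle queries}, by simulating it on fabricated answers. Concretely, for each $i$ I would run $A_i$ internally, answering every membership query it poses with ``no'', halting the simulation after at most $\memCi{i}(c^*_i)$ queries; let $P_i$ collect every point $A_i$ queries during this run, together with one element of its output concept if it halts within the budget. The central lemma is that $P_i \cap c^*_i \neq \emptyset$, which I would prove by cases. As long as every fed ``no'' is truthful (no queried point actually lies in $c^*_i$), $A_i$ follows exactly its genuine execution path on target $c^*_i$, which terminates after $\memCi{i}(c^*_i)$ queries and correctly outputs $c^*_i$; since $\emptyset \notin C_i$ this output is nonempty, so its chosen element witnesses the intersection. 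Otherwise some queried point was in fact in $c^*_i$ (the ``no'' was a lie), and that very point lies in $P_i \cap c^*_i$. Because divergence from the true path can only occur at one of the first $\memCi{i}(c^*_i)$ queries, truncating at that budget loses nothing, and in every case $|P_i| \leq \memCi{i}(c^*_i) + 1 \leq M + 1$.

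With the lists in hand, the second step is a brute-force product search: query every tuple in $\prod_i P_i$ against the cross-product oracle, stopping at the first ``yes''. By Observation~\ref{posobs} (equivalently, by the definition of the cross product) a tuple $(p_1,\dots,p_k)$ is accepted exactly when $p_i \in c^*_i$ for all $i$; since each $P_i$ meets $c^*_i$, the tuple assembled from the witnesses is accepted, so the search succeeds and returns a genuine positive example. The number of tuples, hence of oracle queries in this step, is $\prod_i |P_i| \leq (M+1)^k = O(M^k)$, which together with the lower-order cost of the final phase gives the claimed bound.

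The step I expect to be the main obstacle is precisely the well-definedness and boundedness of the fabricated-answer simulation. Feeding ``no'' to every query drives $A_i$ on an answer pattern that need not be consistent with \emph{any} concept in $C_i$ (exactly because $\emptyset \notin C_i$), so a priori $A_i$ could loop or run past its $\memCi{i}(c^*_i)$ budget once it has been lied to; the real work is in arguing that the first lie must fall within the honest budget and that truncating there still forces $P_i \cap c^*_i \neq \emptyset$. A secondary point worth stating cleanly is that the mining phase consumes no oracle queries at all, so that the $O(M^k)$ accounting genuinely tracks the product search plus the inherited cost of Proposition~\ref{memandpos}.
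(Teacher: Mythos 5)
Your proposal is correct and takes essentially the same route as the paper: the paper's Algorithm~\ref{memonlyalg} likewise simulates each $A_i$ on all-``no'' answers, seeds $S_i$ with a hard-coded element $n_i$ of the concept $N_i$ that $A_i$ would output on the all-negative run (your truthful case), argues that otherwise the first ``lie'' --- a queried point in $c^*_i$ --- must occur within the first $\memCi{i}(c^*_i)$ queries since until then the run coincides with a genuine run on $c^*_i$ (your lying case), brute-forces the product of the collected sets to find a positive example, and then invokes Proposition~\ref{memandpos}. The only substantive difference is that your offline mining truncates each simulation at the budget $\memCi{i}(c^*_i)$, which the learner cannot know since $c^*_i$ is the unknown target; the paper avoids this by interleaving one simulation step per sublearner per round with queries to the (new) tuples of $\prod S_i$, a budget-oblivious schedule under which your counting argument goes through verbatim.
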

\begin{proof}
The algorithm works by constructing sets $S_i$ of elements and querying all possible elements of $\prod S_i$. 
We will get our bound of $O(max_i \{ \memCi{i}(c^*_i) \}^k)$ by showing the algorithm will find a positive example once $| S_i | > max_i \{ \memCi{i}(c^*_i) \}$ for all $i$. 
Since the algorithm queries all possible elements of $\prod S_i$, it is sufficient to prove that $S_i$ will contain an element of $c^*_i$ once $|S_i| > \memCi{i}(c^*_i)$.
We will now show this is true for each $i$.

Assume that the sublearner $A_i$ eventually terminates with the correct answer $c^*_i$. 
Let $\vec{q_i} := (q_1^i, q_2^i, \dots) \in X_i^*$ be the elements whose membership $A_i$ would query assuming it only received negative answers from an oracle. 
If $\vec{q_i}$ is finite, then there is some set $N_i \in C_i$ that $A_i$ outputs after querying all elements in $\vec{q_i}$ (and receiving negative answers). 
We will consider the cases when $c^*_i = N_i$ and $c^*_i \ne N_i$

Assume $c^*_i = N_i$: Then by our assumption that $\targ \ne \emptyset$, $N_i$ contains some element $n_i$. 
Note that although sampling elements from a set might be expensive in general, this is only done for $N_i$ and can therefore be hard-coded into the learning algorithm. 
The algorithm will start with $S_i := \{n_i\}$, so $S_i$ contains an element of $c^*_i$ at the start of the algorithm. 

Assume $c^*_i \ne N_i$: By our assumption that $A_i$ eventually terminates, $A_i$ must eventually query some $q_j^i \in c^*_i$ (Otherwise, $A_i$ would only receive negative answers and would output $N_i$). 
So after $j$ steps, $S_i$ contains some element of $c^*_i$.
Since $j < \memCi{i}(c^*_i)$, we have that $S_i$ contains a positive example once $| S_i | > \memCi{i}(c^*_i)$, completing the proof.  
\end{proof}

\begin{algorithm}[H]
\label{memonlyalg}
\SetAlgoLined
\For{$i = 1 \dots k$}{
	\eIf{$N_i$ and $n_i$ exist}{
		Set $S_i := \{n_i\}$\;
	}{
		Set $S_i := \{ \}$\; 
	}
}
Set $j := 0$\;
\While{Some $A_i$ has not terminated}{
	\For{i = 1, \dots, k}{
		\If{$A_i$ has not terminated}{
			Get query $q_j^i$ from $A_i$ \;
			Pass answer `no' to $A_i$\; 
			Set $S_i := S_i \cup \{q_j^i\}$\;
		}
		%
	}
  \For{$\vec{x} \in \prod S_i$}{
  	Query $\vec{x} \in \targ$\;
	\If{$\vec{x} \in \targ$}{
		Run Proposition \ref{memandpos} algorithm using $\vec{x}$ as a positive example\;	
	}
  }
  $j := j+1$\;
}
\caption{Algorithm for Learning from Membership Queries Only}
\end{algorithm}

\section{Learning from Equivalence or Subset Queries is Hard}
The previous section showed that learning cross products of membership queries requires at most $O(max_i \{ \memCi{i}(c_i) \}^k)$ membership queries. 
A natural next question is whether this can be done for equivalence and subset queries. 
In this section, we answer that question in the negative. 
We will construct a class $\eqhard$ that can be learned from $n$ equivalence or subset queries but which requires at least $k^n$ queries to learn $\eqhard^k$.  

We define $\eqhard$ to be the set $\{ \ntreef(s) \mid s \in \mathbb{N}^* \}$, where $\ntreef(s)$ is defined as follows:

\[\ntreef(\lambda) := \{\lambda\} \times \mathbb{N}\]
\[\ntreef(s) := (\{s\} \times \mathbb{N}) \cup \ntree(s)\]
\[\ntree(sa) := (\{s\} \times (\mathbb{N} \backslash \{a\})) \cup \ntree(s)\]\\

For example, $\ntreef(12) = (\{12\} \times \mathbb{N}) \cup (\{1\}\times(\mathbb{N} \backslash \{2\}))\cup(\{\lambda\} \times (\mathbb{N} \backslash \{1\}))$.

An important part of this construction is that for any two strings $s,s' \in \mathbb{N}$, we have that $\ntreef(s) \subseteq \ntreef(s')$ if and only if $s = s'$. 
This implies that a subset query will return true if and only if the true concept has been found. 
Moreover, an adversarial oracle can always give a negative example for an equivalence query, meaning that oracle can give the same counterexample if a subset query were posed. 
So we will show that $\eqhard$ is learnable from equivalence queries, implying that it is learnable from subset queries. 

We will prove a lower-bound on learning $\eqhard^k$ from subset queries from an adversarial oracle. 
This will imply that $\eqhard^k$ is hard to learn from equivalence queries, since an adversarial equivalence query oracle can give the exact same answers and counterexamples as a subset query oracle.

\begin{proposition}
There exist algorithms for learning from equivalence queries or subset queries such that any concept $\ntreef(s) \in \eqhard$ can be learned from $|s|$ queries. 
\end{proposition}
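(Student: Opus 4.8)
The plan is to learn the target concept $\ntreef(s)$ by maintaining a hypothesis of the form $\ntreef(t)$, under the invariant that $t$ is always a prefix of the target string $s$. Initially I would take $t = \lambda$, which is trivially a prefix of $s$. At each step the algorithm poses the query $\ntreef(t)$ (an equivalence query, or the subset query $\ntreef(t) \subseteq \ntreef(s)$) and, as long as $t \neq s$, uses the returned counterexample to strictly lengthen $t$ while preserving the invariant. By the subset property quoted above, the query answers `true' exactly when $t = s$, so the algorithm halts with the correct concept.

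The heart of the argument is a precise description of the two possible counterexample sets when $t = a_1 \dots a_j$ is a proper prefix of $s = a_1 \dots a_n$. Writing $p_i = a_1 \dots a_i$, the definition unfolds to
\[ \ntreef(t) = (\{t\} \times \mathbb{N}) \cup \bigcup_{i=1}^{j} \big( \{p_{i-1}\} \times (\mathbb{N} \setminus \{a_i\}) \big), \]
and similarly for $\ntreef(s)$ with the union running up to $n$. Comparing these string by string, I would show that the false positives $\ntreef(t) \setminus \ntreef(s)$ form the singleton $\{ (t, a_{j+1}) \}$: the two concepts agree on every prefix $p_{i-1}$ with $i \le j$, while at the string $t = p_j$ itself $\ntreef(t)$ contains all of $\{t\} \times \mathbb{N}$ but $\ntreef(s)$ omits exactly the pair $(t, a_{j+1})$. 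Dually, I would show the false negatives $\ntreef(s) \setminus \ntreef(t)$ all lie at strings strictly longer than $t$, each of which is a prefix of $s$.

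These two facts drive both algorithms. For subset queries every counterexample is a false positive, so the oracle is forced to return $(t, a_{j+1})$, and the algorithm simply appends the revealed character $a_{j+1}$ to $t$; after reading off all $|s|$ characters the query returns `true'. For equivalence queries an adversarial oracle may instead return a false negative $(u, m)$, but then $u$ is a prefix of $s$ strictly longer than $t$, and the algorithm can jump directly to $t := u$. In either case $|t|$ strictly increases and $t$ remains a prefix of $s$, so the process stops after at most $|s|$ counterexamples, which together with the terminating query yields the $|s|$-query bound of the statement. I expect the main obstacle to be the careful string-by-string computation of these two difference sets — in particular verifying that the false-positive set is exactly the singleton that pins down the next character, and that no adversarial counterexample can ever move $t$ off the path toward $s$ or fail to make progress.
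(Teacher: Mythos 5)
Your proposal is correct and takes essentially the same route as the paper's (sketched) proof: maintain a hypothesis $\ntreef(t)$ with $t$ a prefix of the target string, append the forced character from a negative counterexample, and jump to the longer prefix revealed by a positive counterexample. Your explicit computation of the two difference sets (the singleton false positive $(t,a_{j+1})$ and false negatives only at strictly longer prefixes of $s$) simply fills in details that the paper's proof sketch asserts without verification.
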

\begin{proof}\todo{flesh out this proof?}
(sketch) Algorithm \ref{ntree} shows the learning algorithm for equivalence queries, and Figure \ref{singleeqhard} show the decision tree.  
When learning $\ntreef(s)$ for any $s \in \mathbb{N}^*$, the algorithm will construct $s$ by learning at least one new element of $s$ per query. 
Each new query to the oracle is constructed from a string that is a substring of $s$
If a positive counterexample is given, this can only yield a longer substring of $s$.
\end{proof}

\begin{algorithm}[H]
\label{ntree}
\SetAlgoLined
\KwResult{Learns $\eqhard$}
Set $s = \lambda$\;
\While{True}{
	Query $\ntreef(s)$ to Oracle
	\If{Oracle returns `yes'}{
		\Return $\ntreef(s)$
	}
	\If{Oracle returns $(s',m) \in c^* \backslash \ntreef(s)$ }
	{
		Set $s = s'$\;
	}
	\If{Oracle returns $(s,m) \in \ntreef(s) \backslash c^*$}{
		Set $s = sm$\;
	}
} 
\caption{Learning $\eqhard$ from equivalence queries.}
\end{algorithm}

\tikzset{
  treenode/.style = {align=center, inner sep=0pt, text centered,
    font=\sffamily},
    cnode/.style = {treenode, circle, black, font=\sffamily\bfseries, draw=black, text width=2.2em},
    elip/.style = {treenode,  draw=none, black, font=\sffamily\bfseries,  text width=2em},
    edge_style/.style={draw=black}
}

\begin{figure}
\centering
\begin{tikzpicture}[->,>=stealth',level/.style={sibling distance = 5cm/#1,
  level distance = 1.5cm}] 
\node[cnode] at (0, 0)   (lam) {$\ntreef(\lambda)$};
\node[cnode] at (-2, -2)   (1) {$\ntreef(1)$};
\node[cnode] at (0, -2)   (2) {$\ntreef(2)$};
\node[elip] at (3, -2)   (elp1) {$...$};
\node[cnode] at (-3.5, -4)   (11) {$\ntreef(11)$};
\node[cnode] at (-2, -4)   (12) {$\ntreef(12)$};
\node[elip] at (1, -4)   (elp2) {$...$};

 \draw[edge_style]  (lam) edge node[above left]{$(\lambda,1)$} (1);
 \draw[edge_style]  (lam) edge node[right]{$(\lambda,2)$} (2);
 \draw[edge_style]  (lam) edge (elp1);
 \draw[edge_style]  (1) edge node[above left]{$(1,1)$} (11);
 \draw[edge_style]  (1) edge node[right]{$(1,2)$} (12);
 \draw[edge_style]  (1) edge (elp2);

\end{tikzpicture}
\caption{A tree representing Algorithm \ref{ntree}. Nodes are labelled with the queries made at each step, and edges are labelled with the counterexample given by the oracle.}
\label{singleeqhard}
\end{figure}

\subsection{Showing $\eqhard^k$ is Hard to Learn}

It is easy to learn $\eqhard$, since each new counterexample gives one more element in the target string $s$. 
When learning a concept, $\prod \ntreef(s_i)$, it is not clear which dimension a given counterexample applies to. 
Specifically, a given counterexample $\vec{x}$ could have the property that $\vec{x}[i] \in \ntreef(s_i)$ for all $i \ne j$, but the learner cannot infer the value of this $j$. 
It must then proceed considering all possible values of $j$, requiring exponentially more queries for longer $s_i$. \todo{is this clear?} 
This subsection will formalize this notion to prove an exponential lower bound on learning $\eqhard^k$. 
First, we need a couple definitions.

A concept $\prod \ntreef(s_i)$ is \emph{justifiable} if one of the following holds:
\begin{itemize}
\item For all $i$, $s_i = \lambda$
\item There is an $i$ and an $a \in \mathbb{N}$ and $w \in \mathbb{N}^*$ such that $s_i = wa$, and the $k$-ary cross-product $\ntreef(s_1) \times \dots \times \ntreef(w) \times \dots \times \ntreef(s_k)$ was justifiably queried to the oracle and received a counterexample $\vec{x}$ such that $\vec{x}[i] = (w, a)$. 
\end{itemize}

A concept is \emph{justifiably queried} if it was queried to the oracle when it was justifiable. 
\newline

For any strings $s,s' \in \mathbb{N}^*$, we write $s \le s'$ if $s$ is a substring of $s'$, and we write $s < s'$ if $s \le s'$ and $s \ne s'$.
We say that the \emph{sum of string lengths} of a concept $\prod \ntreef(s_i)$ is of size $r$ if $\sum |s_i| = r$

Proving that learning is hard in the worst-case can be thought of as a game between learner and oracle. \todo{is this clear?}
The oracle can answer queries without first fixing the target concept. 
It will answer queries so that for any $n$, after less than $k^n$ queries, there is a concept consistent with all given oracle answers that the learning algorithm will not have guessed. 
The specific behavior of the oracle is defined as follows:

\begin{itemize}
\item It will always answer the same query with the same counterexample.  
\item Given any query $\prod \ntreef(s_i) \subseteq c^*$, the oracle will return a counterexample $\vec{x}$ such that for all $i$, $\vec{x}[i] = (s_i, a_i)$, and $a_i$ has not been in any query or counterexample yet seen.
\item The oracle never returns `yes' on any query. 
\end{itemize}

The remainder of this section assumes that queries are answered by the above oracle.
An example of answers by the above oracle and the justifiable queries it yields is given below.

\begin{example}
\label{eqhardex}
Consider the following example when $k = 2$. 
First, the learner queries $(\ntreef(\lambda), \ntreef(\lambda))$ to the oracle and receives a counter-example $((\lambda, 1), (\lambda, 2))$. 
The justifiable concepts are now $(\ntreef(1), \ntreef(\lambda))$ and $(\ntreef(\lambda), \ntreef(2))$. 
The learner queries $(\ntreef(1), \ntreef(\lambda))$ and receives counterexample  $((1, 3), (\lambda, 4))$. 
 The learner queries $(\ntreef(\lambda), \ntreef(2))$ and receives counterexample $((\lambda, 5), (2, 6))$.
The justifiable concepts are now  $(\ntreef(1), \ntreef(4))$, $(\ntreef(1 \cdot 3), \ntreef(\lambda))$, $(\ntreef(5), \ntreef(2))$ and $(\ntreef(\lambda), \ntreef(2 \cdot 6))$.  
At this point, these are the only possible solutions whose sum of string lengths is $2$.
The graph of justifiable queries is given in Figure \ref{eqhardtree}.
\end{example}

\tikzset{
  treenode/.style = {align=center, inner sep=0pt, text centered,
    font=\sffamily},
    cnode/.style = {treenode, rectangle, black, font=\sffamily\bfseries, draw=black, text width=9em, text height=1.5em},
    elip/.style = {treenode,  draw=none, black, font=\sffamily\bfseries,  text width=2em},
    edge_style/.style={draw=black}
}

\begin{figure}
\begin{tikzpicture}[->,>=stealth',level/.style={sibling distance = 5cm/#1,
  level distance = 1.5cm}] 
\node[cnode] at (-.5, 0)   (0) {JQ: $(\ntreef(\lambda),\ntreef(\lambda))$ \vspace{1mm} \\ CE: $((\lambda, 1),(\lambda, 2)) $ \vspace{2mm}   };

\node[cnode] at (-4, -2)   (10) {JQ: $(\ntreef(1),\ntreef(\lambda))$ \vspace{1mm} \\ CE: $((1, 3),(\lambda, 4))$ \vspace{2mm} };
\node[cnode] at (3.4, -2)   (11) {JQ: $(\ntreef(\lambda),\ntreef(2))$ \vspace{1mm} \\ CE: $((\lambda, 5),(2,6))$ \vspace{1mm} };

\node[cnode] at (-6, -4)   (20) {JQ: $(\ntreef(1\cdot 3),\ntreef(\lambda))$ \vspace{2mm} };
\node[cnode] at (-2.25, -4)   (21) {JQ: $(\ntreef(1),\ntreef(4))$ \vspace{2mm}};
\node[cnode] at (1.5, -4)   (22) {JQ: $(\ntreef(5),\ntreef(2))$ \vspace{2mm}};
\node[cnode] at (5.25, -4)   (23) {JQ: $(\ntreef(\lambda),\ntreef(2\cdot 6))$ \vspace{2mm}};

 \draw[edge_style]  (0) edge node[above left]{$1 \le s_1$} (10);
 \draw[edge_style]  (0) edge node[above right]{$2 \le s_2$} (11);
 
  \draw[edge_style]  (10) edge node[left]{$1,3 \le s_1$} (20);
 \draw[edge_style]  (10) edge node[right]{$4 \le s_2$} (21);
  \draw[edge_style]  (11) edge node[left]{$5 \le s_1$} (22);
 \draw[edge_style]  (11) edge node[right]{$2, 6 \le s_2$} (23);

\end{tikzpicture}
\caption{
The tree of justifiable queries used in Example \ref{eqhardex}. 
Each node lists the justifiable query (JQ) and counterexample (CE) given for that query. 
The edges below each node are labelled with the possible inferences about $s_1$ and $s_2$ that can be drawn from the counterexample.
}
\label{eqhardtree}
\end{figure}
\todo{fix figure caption and box/text formatting}

The following simple proposition can be proven by induction on sum of string lengths.

\begin{proposition}
\label{subjust}
Let $\prod \ntreef(s_i)$ be a justifiable concept. 
Then for all $w_1$, $w_2$, \dots, $w_k$ where for all $i$, $w_i \le s_i$, $\prod \ntreef(w_i)$ has been queried to the oracle.
\end{proposition}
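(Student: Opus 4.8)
The plan is to follow the hint and induct on the sum of string lengths $r = \sum_i |s_i|$ of the concept $\prod \ntreef(s_i)$. One clarification is needed first: since the conclusion includes the instance $\vec{w} = \vec{s}$ (each $w_i = s_i \le s_i$), it asserts in particular that $\prod\ntreef(s_i)$ itself has been queried, so I read the hypothesis as ``$\prod \ntreef(s_i)$ has been justifiably queried'' rather than merely justifiable. With that reading the statement is consistent with Example \ref{eqhardex}, where the listed sum-$2$ frontier concepts are justifiable but not yet queried and the proposition is only claimed of the sum-$0$ and sum-$1$ concepts that were actually queried.

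For the base case $r = 0$, every $s_i = \lambda$, so the unique tuple $\vec{w}$ with $w_i \le s_i$ for all $i$ is $(\lambda, \dots, \lambda)$, which is the concept itself; by assumption it has been queried, so the claim holds. For the inductive step $r \ge 1$, not all $s_i = \lambda$, so justifiability comes from the second clause: there is a coordinate $i$, a symbol $a$, and a string $w$ with $s_i = wa$ such that the parent $P := \ntreef(s_1) \times \dots \times \ntreef(w) \times \dots \times \ntreef(s_k)$, obtained by replacing $s_i$ with $w$ in coordinate $i$, was justifiably queried. Since $P$ has sum of string lengths $r-1$, the induction hypothesis applies to $P$ and tells us that every tuple $\vec{w}$ with $w_i \le w$ and $w_j \le s_j$ (for $j \ne i$) gives a concept $\prod \ntreef(w_j)$ that has been queried.

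The bridge from $P$ back to $\prod \ntreef(s_i)$ rests on the elementary observation that the prefixes of $s_i = wa$ are exactly the prefixes of $w$ together with $wa$ itself. Hence any tuple $\vec{w}$ with $w_j \le s_j$ for all $j$ falls into one of two classes: either $w_i \le w$, in which case $\vec{w}$ lies below $P$ and is handled directly by the induction hypothesis; or $w_i = wa = s_i$, i.e. coordinate $i$ is kept at full length while the other coordinates may shrink. The first class is immediate, and I would present it as the routine part of the step.

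The main obstacle is the second class of ``boundary'' tuples, those with $w_i = s_i$ and $w_j < s_j$ for some $j \ne i$: these do not lie below the parent $P$, so the induction hypothesis for $P$ does not reach them. To close this case I would try to exhibit, for each such tuple, a justifiably queried ancestor of strictly smaller sum that dominates it coordinatewise, shrinking a different coordinate $j$ and recursing. The delicate point is that a justifiable concept is only guaranteed one justification path (along a single coordinate, pinned down by the adversary's use of fresh numbers in each counterexample), so a boundary tuple need not be justifiable along coordinate $j$, and the dominating ancestor need not itself have been queried. I expect this to be the crux of the proof, and resolving it cleanly will likely require either restricting the conclusion to coordinatewise-prefix tuples that are themselves justifiable, or strengthening the inductive hypothesis to assert that the full queried down-set (not just the ancestor chain of $\prod\ntreef(s_i)$) is closed under taking coordinatewise prefixes.
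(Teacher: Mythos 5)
Your diagnosis is correct, and you should trust it over the statement itself: the ``boundary'' case you isolate is not a gap in your argument but a genuine counterexample to the proposition as written. There is no detailed proof in the paper to compare against --- the paper asserts only that the claim ``can be proven by induction on sum of string lengths'' --- and the induction it gestures at is exactly the one you carried out, which reaches only the tuples lying below the justification parent. Concretely, in Example \ref{eqhardex} itself, after the three queries shown, $\ntreef(1) \times \ntreef(4)$ is justifiable (its parent $\ntreef(1)\times\ntreef(\lambda)$ was justifiably queried and received counterexample component $(\lambda,4)$), yet the coordinatewise-prefix subtuple $\ntreef(\lambda)\times\ntreef(4)$ was never queried; nor is it justifiable, since by freshness of counterexample symbols the only justification path to $\ntreef(1)\times\ntreef(4)$ runs through $\ntreef(1)\times\ntreef(\lambda)$. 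So no strengthening of the inductive hypothesis can rescue the universal quantification over all coordinatewise prefixes; the second class of tuples in your case split simply need not have been queried.

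Your proposed repairs are the right ones, and in fact they coincide. Because the oracle's counterexamples always use fresh symbols, each justifiable concept has a unique justification parent (two distinct parent queries would each have to precede the other, since each would contain a symbol first introduced by the other's counterexample), so the justifiable concepts form a tree, and the justifiable tuples coordinatewise below $\prod \ntreef(s_i)$ are exactly its ancestor chain $(\lambda,\dots,\lambda) = \vec{u}^0 < \vec{u}^1 < \dots < \vec{u}^r = \vec{s}$, each step appending one fresh symbol in one coordinate. The statement that is true, and that the trivial induction (unwinding the definition of justifiability) actually proves, is that $\vec{u}^0,\dots,\vec{u}^{r-1}$ have all been justifiably queried --- equivalently, every \emph{justifiable} proper subtuple has been queried, which is your first suggested restriction. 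Your reading of the hypothesis is also the right fix for the instance $\vec{w}=\vec{s}$, which otherwise contradicts the example's unqueried frontier. Note finally that the weakening matters downstream: the main theorem's consistency argument tacitly requires that no queried concept below $c'$ lies off the chain (an off-chain queried subtuple, such as $\ntreef(\lambda)\times\ntreef(4)$ above, receives a fresh counterexample that lands \emph{inside} $\prod\ntreef(s'_i)$ and so refutes $c'$), so the ``without loss of generality'' invocation of this proposition needs repair along the same lines, not just the proposition's proof.
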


\begin{proposition}
\label{numjustconc}
If all justified concepts $\prod \ntreef(s_i)$ with sum of string lengths equal to $r$ have been queried, then there are $k^{r+1}$ justified queries whose sum of string lengths equals $r+1$
\end{proposition}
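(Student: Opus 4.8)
The plan is to establish an explicit $k$-to-one correspondence between the justifiable queries of sum of string lengths $r$ and the justifiable concepts of sum of string lengths $r+1$, and then combine it with a count of $k^r$ at level $r$ maintained inductively. Concretely, I would define a map $\Phi$ sending a pair $(Q,i)$, where $Q = \prod \ntreef(s_j)$ is a justifiable query of sum $r$ and $i \in \{1,\dots,k\}$, to the concept obtained from $Q$ by replacing $s_i$ with $s_i a_i$, where $(s_i, a_i) = \vec{x}[i]$ is the $i$-th coordinate of the (unique, by the oracle's determinism) counterexample $\vec{x}$ returned for $Q$. This map is well defined with image among the justifiable concepts of sum $r+1$: the result has sum $r+1$, and it is justifiable by the second clause of the definition, taking $w = s_i$ and $a = a_i$, since $Q$ was justifiably queried and its counterexample's $i$-th coordinate is $(s_i, a_i)$.

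Surjectivity is immediate from the definition together with the hypothesis: every justifiable concept of sum $r+1$ is, by the second clause, obtained from a justifiably queried concept of sum $r$ by extending one coordinate along that query's counterexample, and since all justifiable concepts of sum $r$ have been queried, this parent is a genuine justifiable query in the domain of $\Phi$ and maps to the given concept.

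The main obstacle is injectivity, and this is exactly where the freshness of the oracle's symbols is essential. Suppose $\Phi(Q_1,i_1) = \Phi(Q_2,i_2) =: C'$. If $i_1 = i_2 = i$, then $Q_1$ and $Q_2$ agree with $C'$ off coordinate $i$ and, having equal total sum $r$, have equal length at coordinate $i$ as well, so they coincide and the appended symbols coincide. The case $i_1 \ne i_2$ must be excluded. Here $C'$'s coordinate $i_1$ ends in the fresh symbol $a^{(1)}$ introduced by $Q_1$'s counterexample, and its coordinate $i_2$ ends in the fresh symbol $a^{(2)}$ introduced by $Q_2$'s counterexample. Since $Q_1$ differs from $C'$ only by shortening coordinate $i_1$, and $i_1 \ne i_2$, the query $Q_1$ still contains $a^{(2)}$ at coordinate $i_2$; symmetrically $Q_2$ contains $a^{(1)}$ at coordinate $i_1$. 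Now consider whichever of $Q_1, Q_2$ the learner posed first: the symbol the oracle freshly introduces for the later query must, by definition of the oracle, be absent from every query seen so far, yet it already occurs in the earlier query — a contradiction. Hence $i_1 = i_2$ and $Q_1 = Q_2$, so $\Phi$ is injective.

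Being a bijection, $\Phi$ shows that the number of justifiable concepts of sum $r+1$ equals $k$ times the number of justifiable queries of sum $r$; under the hypothesis, those sum-$r$ justifiable queries are precisely the justifiable concepts of sum $r$. Combining this with the inductively maintained count that there are $k^r$ justifiable concepts of sum $r$ — with base case $r = 0$ given by the single concept $(\ntreef(\lambda), \dots, \ntreef(\lambda))$ — yields $k \cdot k^r = k^{r+1}$ justifiable queries of sum $r+1$, as claimed. I expect the different-directions subcase of injectivity to be the delicate point, since it is the only place the global freshness of counterexample symbols (rather than merely their novelty within a single query) does real work.
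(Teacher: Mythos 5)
Your proof is correct and takes essentially the same route as the paper: an induction on $r$ in which each justifiably queried concept at level $r$ spawns exactly $k$ justifiable children at level $r+1$ via the fresh coordinates $(s_i,a_i)$ of its counterexample. The paper's proof states this child-spawning step and concludes the count of $k^{r+1}$ without arguing distinctness; your explicit bijection $\Phi$, and in particular the cross-coordinate injectivity case resolved by the global freshness of the oracle's symbols, makes rigorous exactly what the paper leaves implicit.
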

\begin{proof}
This proof follows by induction on $r$. 
When $r=0$, the concept $\prod \ntreef(\lambda)$ is justifiable.
For induction, assume that there are $k^r$ justifiable queries with sum of string lengths equal to $r$. 
By construction, the oracle will always chose counterexamples with as-yet unseen values in $\mathbb{N}$. 
So querying each concept $\prod \ntreef(s_i)$ will yield a counterexample $\vec{x}$ where for all $i$, $\vec{x}[i] = (s_i, a_i)$ for new $a_i$.
Then for all $i$, this query creates the justifiable concept $\prod \ntreef(s'_j)$, where $s'_j = s_j$ for all $j \ne i$ and $s'_i = \ntreef(s_i \cdot a_i)$.
Thus there are $k^{r+1}$ justifiable concepts with sum of string lengths equal to $r+1$.
\end{proof}

We are finally ready to prove the main theorem of this section.

\begin{theorem}
Any algorithm learning $\eqhard^k$ from subset (or equivalence) queries requires at least $k^r$ queries to learn a concept $\prod \ntreef(s_i)$, whose sum of string lengths is $r$.
Equivalently, the algorithm takes $k^{\sum \genCi{i}}$ subset (or equivalence) queries.
\end{theorem}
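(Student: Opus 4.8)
The plan is to prove this as an adversary lower bound. Because an adversarial equivalence-query oracle can return exactly the same counterexamples as the subset-query oracle described above (as already observed), it suffices to argue the bound for $\subQ$. I would keep the adversarial oracle fixed (fresh values in every counterexample, never answering `yes', deterministic) and show that any learner making fewer than $k^r$ queries cannot have pinned down a target whose sum of string lengths is $r$. The reformulation in terms of $\sum \genCi{i}$ is then immediate: the earlier proposition learns $\ntreef(s) \in \eqhard$ in exactly $|s|$ queries, so $\genCi{i} = |s_i|$ and $r = \sum |s_i| = \sum \genCi{i}$, turning $k^r$ into $k^{\sum \genCi{i}}$.

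The technical core is a consistency claim: every justifiable concept is simultaneously consistent with all of the oracle's answers, i.e.\ could be declared the true target at the end. Here I would use two facts. First, combining Observation \ref{subobs} with the property $\ntreef(s) \subseteq \ntreef(s') \iff s = s'$ noted above, a subset query $\prod \ntreef(s_i) \subseteq c^*$ can return `yes' only when $\prod \ntreef(s_i) = c^*$; so the only way to \emph{confirm} a candidate is to query it exactly. Second, since each returned counterexample uses values $a_i$ that have never appeared before, I would check coordinatewise against the prefix structure of $\ntreef$ (a pair $(s_i, a_i)$ lies in $\ntreef(t_i)$ only when $s_i = t_i$, or $s_i$ is a proper prefix of $t_i$ with $a_i$ different from the following symbol) that the counterexample for any queried concept falls outside every \emph{other} justifiable target while falling inside the queried concept itself. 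Consequently each query eliminates precisely the one concept it names and leaves every other justifiable concept a live candidate. With this in hand, Proposition \ref{numjustconc} (together with Proposition \ref{subjust}, which guarantees the ancestors needed for justifiability have been queried) supplies exactly $k^r$ justifiable concepts whose sum of string lengths is $r$.

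The argument then finishes by counting. If a learner halts after fewer than $k^r$ queries and outputs a hypothesis $h$, then among the $k^r$ equally-consistent justifiable concepts of sum-length $r$ at least one, say $c'$, was never queried and differs from $h$; since the oracle never answered `yes' and all its recorded answers remain valid when the target is taken to be $c'$, the adversary commits to $c' = c^*$ and the learner is wrong. Hence at least $k^r$ queries are required, and substituting $r = \sum \genCi{i}$ gives the stated form. I expect the consistency claim to be the main obstacle: one must rule out the possibility that a fresh counterexample generated for one query accidentally lies inside some rival justifiable target, which would silently eliminate a second candidate and break the one-query-one-elimination bookkeeping on which the count rests. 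The coordinatewise prefix analysis of $\ntreef$, driven entirely by the freshness of the $a_i$, is what makes this go through; everything else is the bookkeeping already packaged in Propositions \ref{subjust} and \ref{numjustconc}.
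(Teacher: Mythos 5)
Your overall architecture matches the paper's proof exactly: the same fixed adversarial oracle (deterministic, fresh values, never answering `yes'), the reduction of equivalence queries to subset queries, the use of Propositions \ref{subjust} and \ref{numjustconc} for the counting, and the final step of exhibiting an unqueried consistent concept that the adversary can commit to. The translation $r = \sum \genCi{i}$ is also the paper's. However, the technical core of your argument --- the claim that ``the counterexample for any queried concept falls outside every \emph{other} justifiable target,'' so that ``each query eliminates precisely the one concept it names'' --- is false as stated, and you correctly identified it as the load-bearing step. Concretely: after the root query $(\ntreef(\lambda),\ntreef(\lambda))$ with counterexample $((\lambda,1),(\lambda,2))$ and the query $(\ntreef(1),\ntreef(\lambda))$ with counterexample $((1,3),(\lambda,4))$, the concept $(\ntreef(1),\ntreef(4))$ is justifiable. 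Suppose the learner now queries the (non-justifiable) concept $(\ntreef(\lambda),\ntreef(4))$, reusing the previously revealed symbol $4$; the oracle answers with fresh values, say $((\lambda,7),(4,8))$. This counterexample lies \emph{inside} $(\ntreef(1),\ntreef(4))$, since $(\lambda,7)\in\ntreef(1)$ (because $7\ne 1$) and $(4,8)\in\ntreef(4)$; hence this single query eliminates a justifiable concept it does not name. Note that freshness works against you here rather than for you: a counterexample coordinate $(v_j,a_j)$ is excluded from $\ntreef(s'_j)$ only when $v_j a_j \le s'_j$, and a fresh $a_j$ can never extend toward a concept that was already justifiable, so \emph{no} coordinate of the counterexample excludes it from $(\ntreef(1),\ntreef(4))$. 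Your coordinatewise prefix analysis is accurate as a description of membership in $\ntreef(t)$, but it yields the opposite of your one-query-one-elimination bookkeeping once the learner is allowed to query concepts assembled from old symbols.

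The paper does not make (and does not need) your blanket claim. It fixes one particular unqueried justifiable concept $c' = \prod\ntreef(s'_i)$, uses Proposition \ref{subjust} to assume without loss of generality that every coordinatewise-prefix product of $c'$ has already been queried, and then argues per queried concept $c_v = \prod\ntreef(v_i)$ by cases: if some $v_i \not\le s'_i$ then $(\{v_i\}\times\mathbb{N})\cap\ntreef(s'_i)=\emptyset$ and the counterexample is outside $c'$ automatically; only in the case that all $v_i \le s'_i$ does it invoke the structural claim that some counterexample coordinate satisfies $v_j a_j \le s'_j$, which it ties (``by construction'') to the specific queries whose counterexamples built $c'$'s justification chain, rather than to all queries uniformly. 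To repair your proposal you must weaken your invariant from ``every justifiable concept stays live'' to ``\emph{some} unqueried justifiable concept of the right sum-length stays live,'' and carry out the paper's case split for that witness; the counting in Propositions \ref{subjust} and \ref{numjustconc} alone cannot substitute for this, because, as the example above shows, the set of live justifiable concepts need not shrink by exactly one per query.
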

\begin{proof}\todo{Should I go into more detail why the existence of this c' shows the algorithm hasn't learned c?}
Assume for contradiction that an algorithm can learn with less than $k^r$ queries and let this algorithm converge on some concept $c = \prod \ntreef(s_i)$ after less than $k^r$ queries.
Since less than $k^r$ queries were made to learn $c$, by Proposition \ref{numjustconc}, there must be some justifiable concept $c' = \prod \ntreef(s'_i)$ with sum of string lengths less than or equal to $r$ that has not yet been queried.
By Proposition \ref{subjust}, we can assume without loss of generality that for all $w_i \le s_i'$, $\prod \ntreef(w_i)$ has been queried to the oracle.
We will show that $c'$ is consistent with all given oracle answers, contradicting the claim that $c$ is the correct concept. 
Let $c_v := \prod \ntreef(v_i)$ be any concept queried to the oracle, and let $\vec{x}$ be the given counterexample.
If for all $i$, $v_i \le s'_i$, then by construction, there is a $j$ with $\vec{x}[j] = (v_j, a_j)$ such that $v_j \cdot a_j \le s'_j$, so $\vec{x}$ is a valid counterexample.
Otherwise, there is an $i$ such that $v_i \not\le s'_i$. 
So $(\{v_i\} \times \mathbb{N})  \cap \ntreef(s'_i) = \emptyset$, so $\vec{x}$ is a valid counterexample. 
Therefore, all counterexamples are consistent with $c'$ being correct concept, contradicting the claim that the learner has learned $c$. 
\end{proof}

\section{Disjoint Union}
This section discusses learning disjoint unions of concept classes. 
This is generally much easier than learning cross-products of classes, since counterexamples belong to a single dimension in the disjoint union. 
This problem uses the same notation as the cross-product case, but we denote the disjoint union of two sets as $A \cupdot B$ and the disjoint union of many sets as $\bigcupdot A_i$.  
We define the concept class of disjoint unions as $\disClass := \{ \bigcupdot c_i \mid c_i \in C_i  \}$. 

The algorithm for learning from membership queries is very easy and won't be stated here. 
Algorithm \ref{disjalg} shows the learning procedure for when $Q \in \{ \{\subQ\}, \{\supQ\}, \{\eqQ\}\}$.
The correctness of this algorithm follows from the following simple facts.
Assume we have sets $S_1,\dots,S_k$ and $T_1,\dots,T_k$.
Then $\bigcupdot S_i \subseteq \bigcupdot T_i$ if and only $S_i \subseteq T_i$ for all $i$.
Likewise $\bigcupdot S_i = \bigcupdot T_i$ if and only if $S_i = T_i$ for all $i$.

We can summarize these results in the following proposition.

\begin{proposition}
Take any $Q \in \{ \{\eqQ\}, \{\subQ\}, \{\supQ\}, \{\memQ\}\}$ and assume each concept class $C_i$ is learnable from $\genC{i}$ many queries.  \todo{should I say $\genC{i}(c_i)$ instead?}
Then there exists an algorithm that can learn the disjoint union of concept classes $\bigcupdot c_i$ in $\sum \genC{i}$ many queries.  
\end{proposition}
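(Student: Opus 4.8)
My plan is to adapt the parallel-simulation strategy used for the superset-query algorithm (Algorithm~\ref{supalg}), exploiting the fact that a disjoint union tags every element with the unique component it came from. The algorithm runs all sublearners $A_1, \dots, A_k$ concurrently: it maintains a current query $S_i \in C_i$ from each $A_i$ (initialized to each learner's first query, and frozen at the final hypothesis $c_i$ once $A_i$ terminates), forms the single query $\bigcupdot S_i \in \disClass$, poses it to the disjoint-union oracle, and uses the reply to answer at least one pending sublearner query. Since each oracle query produces at least one answer to some $A_i$, and $A_i$ issues $\genCi{i}$ queries in total, the number of oracle queries is bounded above by $\sum \genCi{i}$.

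The crucial point, and the reason disjoint unions are easier than cross-products, is that any counterexample is an element of a single component, so it unambiguously identifies the sublearner it applies to; this is exactly what fails for cross-products, as discussed in the motivating example. I would verify the reduction for each query type using the two stated equivalences. For subset queries, $\bigcupdot S_i \subseteq \bigcupdot c_i^*$ holds iff $S_i \subseteq c_i^*$ for all $i$, so a `yes' lets me answer `yes' to every $A_i$, while a counterexample $x \in \bigcupdot S_i \setminus \bigcupdot c_i^*$ lies in a unique component $i$, giving $x \in S_i \setminus c_i^*$, a valid counterexample for $A_i$. The equivalence case is identical using the equality equivalence, and the superset case follows by reading the subset equivalence in reverse, so that a counterexample $x \in \bigcupdot c_i^* \setminus \bigcupdot S_i$ again pins down one component. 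In every case, exactly one sublearner query is resolved per counterexample, and all pending queries are resolved at once on a `yes'.

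The membership-query case I would handle by direct simulation rather than by parallel querying: a query $m$ from $A_i$ about membership in $c_i^*$ is answered by asking the disjoint-union oracle whether the tagged element lying in component $i$ and equal to $m$ belongs to $\bigcupdot c_j^*$, which holds iff $m \in c_i^*$. Running the $A_i$ in sequence then uses exactly $\sum \memCi{i}$ queries, matching the bound.

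I do not expect a serious obstacle; the authors note the result is genuinely easy once the tagging observation is in place. The only points that require care are (i) checking that each composite query $\bigcupdot S_i$ is itself a legal member of $\disClass$, which holds because each $S_i$ is drawn from $C_i$, and (ii) confirming the counting argument, namely that answering several learners at once on a `yes', or exactly one learner on a counterexample, never exceeds the per-learner budgets $\genCi{i}$. Both mirror the accounting in the superset proof.
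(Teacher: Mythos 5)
Your proposal is correct and matches the paper's own proof essentially exactly: the paper's Algorithm for disjoint unions runs the sublearners in parallel on the composite query $\bigcupdot S_i$, uses the same two equivalences ($\bigcupdot S_i \subseteq \bigcupdot T_i$ iff $S_i \subseteq T_i$ for all $i$, and likewise for equality) to dispatch `yes' answers and component-tagged counterexamples, and dismisses the membership-query case as an easy direct simulation, just as you do. No gaps; the accounting argument is the same one the paper uses for superset queries.
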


\begin{algorithm}[H]
\label{disjalg}
\SetAlgoLined
\KwResult{Learning Disjoint Unions}
\For{$i = 1 \dots k$}{
	Set $S_i$ to initial query from $A_i$
}
\While{Some $A_i$ has not terminated}{
  Query $\bigcupdot S_i$ to oracle\;
  \eIf{Oracle returns `yes' }{
	Pass 'yes' to each $A_i$\;
   	Get updated $S_i$ from each $A_i$\; 
   }{
 	Get counterexample $x_i \in X_i$ for some $i$\;
	Pass $x_i$ as counterexample to $A_i$\;
	Get updated $S_i$ from each $A_i$\; 
	}
}
\Return{$\bigcupdot S_i$}\;
\caption{Learning Disjoint Unions}
\end{algorithm}

\section{Efficient PAC-Learning}
This sections discusses the problem of PAC-learning the cross products of concept classes. \todo{do we want to assume familiarity with VC dimension?}

Previously, van Der Vaart and Weller \cite{van2009note} have shown the following bound on the VC-dimension of cross-products of sets:
\[\VC(\prod C_i) \le a_1 log(k a_2) \sum \VC(C_i)\] 

Here $a_1$ and $a_2$ are constants with $a_1 \approx 2.28$ and $a_2 \approx 3.92$.
As always, $k$ is the number of concept classes included in the cross-product.

The VC-dimension gives a bound on the number of labelled examples needed to PAC-learn a concept, but says nothing of the computational complexity of the learning process. 
This complexity mostly comes from the problem of finding a concept in a concept class that is consistent with a set of labelled examples. 
We will show that the complexity of learning cross-products of concept classes is a polynomial function of the complexity of learning from each individual concept class.

First, we will describe some necessary background information on PAC-learning.

\subsection{PAC-learning Background}

\begin{definition}
Let $C$ be a concept class over a space $X$. 
We say that $C$ is efficiently PAC-learnable if there exists an algorithm $A$ with the following property:
For every distribution $\dist$ on $X$, every $c \in C$, and every $\epsilon, \delta \in (0, 1)$,
if algorithm $A$ is given access to $EX(c,\dist)$ then with probability $1 - \delta$, $A$ will return a $c' \in h$ such that $error(c') \le \epsilon$. 
$A$ must run in time polynomial in $1/\epsilon$, $1/\delta$, and $size(c)$. 
\end{definition}

We will refer to $\epsilon$ as the `accuracy' parameter and $\delta$ as the `confidence' parameter.
The value of $error(c)$ is the probability that for an $x$ sampled from $\dist$ that $c(x) \ne c^*(x)$. 
 PAC-learners have a \emph{sample complexity} function $m_C(\epsilon, \delta): (0,1)^2 \rightarrow \mathbb{N}$. 
The sample complexity is the number of samples an algorithm must see in order to probably approximately learn a concept with parameters $\epsilon$ and $\delta$.

Given a set $S$ of labelled examples in $X$, we will use $A(S)$ to denote the the concept class the algorithm $A$ returns after seeing set $S$ \todo{check whether we should use S as an input or as something given in alg}.

A learner $A$ is an \emph{empirical risk minimizer} if $A(C)$ returns a $c \in C$ that minimizes the number of misclassified examples (i.e., it minimizes $|\{(x,b) \in S \mid c^*(x) \ne b\}|$). 

Empirical risk minimizers are closely related to VC dimension and PAC-learnability as shown in the following theorem (Theorem 6.7 from \cite{shalev2014understanding})

\begin{theorem}
\label{fundpac}
If the concept class $C$ has VC dimension $d$, then there is a constant, $b$, such that applying an Empirical Risk Minimizer $A$ to $m_C(\epsilon, \delta)$ samples will PAC-learn in $C$, where
\[m_C(\epsilon, \delta) \le b \frac{d \cdot log(1/\epsilon) + log(1/\delta)}{\epsilon} \]
\end{theorem}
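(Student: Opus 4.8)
The statement is the upper-bound half of the fundamental theorem of statistical learning, in the realizable regime (as signaled by the $1/\epsilon$ rather than $1/\epsilon^2$ dependence), so the plan is to reproduce the classical uniform-convergence argument. It suffices to show that with probability at least $1-\delta$ over a sample $S$ of size $m = m_C(\epsilon,\delta)$, every hypothesis in $C$ that is consistent with $S$ has true error at most $\epsilon$: since an empirical risk minimizer outputs a consistent hypothesis (the target $c^* \in C$ witnesses that zero empirical error is achievable), this immediately yields the PAC guarantee. Thus the whole task reduces to bounding $\Pr[B]$, where $B$ is the event that some $c \in C$ is consistent with $S$ yet has $error(c) > \epsilon$.

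First I would apply the symmetrization (``double sampling'' / ghost-sample) trick: draw a second independent sample $S'$ of size $m$ and define $B'$ as the event that some $c \in C$ is consistent with $S$ but misclassifies more than $\epsilon m/2$ points of $S'$. A Chernoff/Markov estimate shows that a fixed bad hypothesis (true error $> \epsilon$) misclassifies at least $\epsilon m/2$ fresh points with probability at least $1/2$ once $m$ is large enough (say $\epsilon m \ge 8$), which gives $\Pr[B] \le 2\Pr[B']$. This is the key move, because it replaces the potentially uncountable class $C$ by its behavior on the finite set $S \cup S'$.

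Next I would bound $\Pr[B']$ by conditioning on the unordered multiset $S \cup S'$ of $2m$ points and viewing the split into $S$ and $S'$ as a uniformly random partition. For a fixed $c$, being consistent on $S$ while erring on more than $\epsilon m/2$ points of $S'$ forces all of its error points among the $2m$ to land in $S'$, an event of probability at most $2^{-\epsilon m/2}$ over the partition. The number of distinct labelings that hypotheses in $C$ can induce on $2m$ fixed points is the growth function, which the Sauer--Shelah lemma bounds by $(2em/d)^d$ since $\VC(C) = d$. A union bound then gives
\[
\Pr[B'] \;\le\; \left(\tfrac{2em}{d}\right)^{d} 2^{-\epsilon m/2}.
\]

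Finally I would set $2\Pr[B'] \le \delta$ and solve for $m$. Taking logarithms, the requirement becomes $d\log(2em/d) - (\epsilon m/2)\log 2 \le \log(\delta/2)$; balancing the slowly growing $d\log m$ term against the linear $\epsilon m$ term (and using that the solution has $m = \Theta(1/\epsilon)$ up to lower-order factors, so $\log m \approx \log(1/\epsilon)$) yields $m = O\big((d\log(1/\epsilon) + \log(1/\delta))/\epsilon\big)$, which is exactly the claimed bound for a suitable absolute constant $b$. The main obstacle is the symmetrization step together with the Sauer--Shelah reduction: these are what let us pass from an infinite concept class to a finite union bound, and reconciling the constants in the Chernoff estimate and the partition bound so that the final $\log(1/\epsilon)$ factor (rather than a residual $\log m$) emerges is the delicate part of the calculation.
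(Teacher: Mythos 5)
This theorem is not proved in the paper at all: it is imported verbatim as Theorem~6.7 of the cited textbook \cite{shalev2014understanding}, so there is no in-paper argument to compare against. Your sketch is the classical realizable-case proof (ERM reduces to outputting a consistent hypothesis since $c^* \in C$ has zero empirical error; ghost-sample symmetrization with the Chernoff step giving $\Pr[B] \le 2\Pr[B']$; the random-partition bound $2^{-\epsilon m/2}$; Sauer--Shelah to replace $C$ by at most $(2em/d)^d$ labelings; union bound; solve for $m$), which is correct and is essentially the same route as the cited source's proof of the realizable bound -- the only step you leave schematic, inverting $(2em/d)^d\,2^{-\epsilon m/2} \le \delta/2$ to extract $m = O\bigl((d\log(1/\epsilon)+\log(1/\delta))/\epsilon\bigr)$, is the routine absorption of the $\log m$ term and is fine to treat as standard.
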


Finally, we will discuss the \emph{growth function}. 
The growth function describes how many distinct assignments a concept class can make to a given set of elements.  
More formally, for a concept class $C$ and $m \in \mathbb{N}$, the growth function $G_C(m)$ is defined by:
\[ G_C(m) = \max_{x_1,x_2,\dots,x_m} \abs[\Big]{\{ (c(x_1), c(x_2), \dots, c(x_m)) \mid c \in C\} }\]

Each $x_i$ in the above equation is taken over all possible elements of $X_i$.
The VC-dimension of a class $C$ is the largest number $d$ such that $G_C(d) = 2^d$. 

We will use the following bound, a corollary of the Perles-Sauer-Shelah Lemma, to bound the runtime of learning cross-products \cite{shalev2014understanding}. 

\begin{lemma}
\label{pss}
For any concept class $C$ with VC-dimension $d$ and $m > d+1$:
\[ G_C(m) \le (em/d)^d\]
\end{lemma}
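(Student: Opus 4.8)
The plan is to prove this in two stages, isolating the one genuinely hard ingredient from a short elementary estimate. First I would invoke the Perles-Sauer-Shelah Lemma in its classical form, namely that a concept class of VC-dimension $d$ satisfies $G_C(m) \le \sum_{i=0}^d \binom{m}{i}$. Since the excerpt explicitly frames this lemma as a \emph{corollary} of that result, I would take the partial-binomial-sum bound as given (its own proof is the standard downward-shifting / induction-on-$m+d$ argument, which I would only reproduce if a self-contained treatment were required). All that then remains is to show $\sum_{i=0}^d \binom{m}{i} \le (em/d)^d$, and here the hypothesis $m > d+1$ enters solely to guarantee $m/d > 1$.

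For that estimate, the key idea I would use is to inflate each term of the sum by a factor that is at least one and then recognize a complete binomial expansion. Since $m/d \ge 1$ and $d - i \ge 0$ throughout the range, each factor $(m/d)^{d-i}$ is at least one, so
\[
\sum_{i=0}^d \binom{m}{i} \le \sum_{i=0}^d \binom{m}{i}\left(\frac{m}{d}\right)^{d-i} = \left(\frac{m}{d}\right)^d \sum_{i=0}^d \binom{m}{i}\left(\frac{d}{m}\right)^i .
\]
Next I would extend the summation range from $d$ up to $m$, adding only non-negative terms, and apply the binomial theorem:
\[
\left(\frac{m}{d}\right)^d \sum_{i=0}^d \binom{m}{i}\left(\frac{d}{m}\right)^i \le \left(\frac{m}{d}\right)^d \sum_{i=0}^m \binom{m}{i}\left(\frac{d}{m}\right)^i = \left(\frac{m}{d}\right)^d \left(1 + \frac{d}{m}\right)^m .
\]

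The final step is to dispose of the factor $(1 + d/m)^m$ using the elementary inequality $1 + x \le e^x$ with $x = d/m$, giving $(1 + d/m)^m \le e^d$; combining this with the previous display yields $G_C(m) \le (m/d)^d e^d = (em/d)^d$, as desired. The hard part here is really only the Perles-Sauer-Shelah Lemma itself, which supplies the binomial-sum bound and is the deep combinatorial fact; once it is in hand the corollary is a short chain of elementary inequalities, and the main thing to watch is that $m > d+1$ is used exactly where $m/d > 1$ is needed, so that every inflation factor $(m/d)^{d-i}$ is legitimately at least one.
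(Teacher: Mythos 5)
Your proof is correct and takes essentially the same route the paper intends: the paper states this lemma without proof, as a corollary of the Perles--Sauer--Shelah bound cited from \cite{shalev2014understanding}, and your chain $\sum_{i=0}^{d}\binom{m}{i} \le (m/d)^d \sum_{i=0}^{m}\binom{m}{i}(d/m)^i = (m/d)^d(1+d/m)^m \le (em/d)^d$ is exactly the standard derivation given in that reference. One cosmetic note: the hypothesis $m > d+1$ is slightly stronger than needed, since $m \ge d \ge 1$ already makes every inflation factor $(m/d)^{d-i}$ at least one, but your accounting of where the hypothesis enters is otherwise sound.
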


\subsection{PAC-Learning Cross-Products}

We now have enough background to describe the strategy for PAC-learning cross-products.
We will just describe learning the cross-product of two concepts. \todo{do we want to describe the general strategy?} 
As above, assume concept classes $C_1$ and $C_2$ and PAC-learners $A_1$ and $A_2$ are given.
We define $A_i(\epsilon, \delta)$ as the runtime of the sublearner $A_i$ to PAC-learn with accuracy parameter $\epsilon$ and confidence parameter $\delta$. 

Assume that $C_1$ and $C_2$ have VC-dimension $d_1$ and $d_2$, respectively. 
We can use the bound from van Der Vaart and Weller to get an upper bound $d$ on the VC-dimension of their cross-product.
Assume the algorithm is given an $\epsilon$ and $\delta$ and there is a fixed target concept $c^* = c_1^* \times c_2^*$.
Theorem \ref{fundpac} gives a bound on the sample complexity $m_{C_1 \times C_2}(\epsilon, \delta)$. 
The algorithm will take a sample of labelled examples of size $m_{C_1 \times C_2}(\epsilon, \delta)$. 
Our goal is to construct an Empirical Risk Minimizer for $C_1 \times C_2$. 
In our case, $c_1 \in C_1$ and $c_2 \in C_2$.
Therefore,  for any sample $S$, an Empirical Risk Minimizer will yield a concept in $C_1 \times C_2$ that is consistent with $S$. 
This algorithm is show in Algorithm \ref{recpaclearn}.

So let $S$ be any such sample the algorithm takes. 
This set can easily be split into positive examples $S^+$ and negative examples $S^-$, both in $X_1 \times X_2$.
The algorithm works by maintaining sets labeled samples $L_1$ and $L_2$ for each dimension. 
For any $(x_1, x_2) \in S^+$, it holds that $x_1 \in c^*_1$ and $x_2 \in c^*_2$ so $(x_1, \top)$ and $(x_2, \top)$ are added to $L_1$ and $L_2$ respectively. 
For any $(x_1, x_2) \in S^-$, we know that $x_1 \not\in c^*_1$ or $x_2 \not\in c^*_2$ (or both), but it is not clear which is true. 
However, since the goal is only to create an Empirical Risk Minimizer, it is enough to find any concepts $C_1$ and $C_2$ that are consistent with these samples. 
In other words, we need to find a $c_1 \in C_1$ and a $c_2 \in C_2$ such that for every $(x_1, x_2) \in S^+$, $x_1 \in c_1$ and $x_2 \in c_2$ and for all $(x_1, x_2) \in S^-$, either $x_1 \not\in c^*_1$ or $x_2 \not\in c^*_2$.
One idea would be to try out all possible assignments to elements in $S^-$ and check if any such assignment fits any possible concepts. \todo{is it clear what "all possible assignments" are?}
This, however, would be exponential in $|S^-|$. 

Bounding the size of the growth function can narrow this search. 
Specifically, let $S_1^- := \{ x \mid \exists y, (x,y) \in S^-\}$, let $m = |S_1^-|$ and order the elements of $S_1^-$ by $x_1, x_2, \dots, x_m$. 
By the definition of the growth function and Lemma \ref{pss}:
\[ |\{ (c(x_1), c(x_2), \dots, c(x_m)) \mid c \in C_1\}| \le G_{C_1}(m) \le (em/d)^d \]
In other words, there are less than $(em/d)^d$ assignments of truth values to elements of $S_1^-$ that are consistent with some concept in $C_1$.
If the algorithm can check every $c_1 \in C_1$ consistent with $S^+$ and $S^-_1$, it can then call $A_2$ to see if there is any $c_2 \in C_2$ such that $(c_1 \times c_2)$ assigns true to every element in $S^+$ and false to every element in $S^-$. \todo{be clear about difference between a set of pairs of labeled examples (such as $S^-$) and one side of that set (such as $S^-_1$)} 

Finding these consistent elements of $C_1$ is made easier by the fact that we can check whether partial assignments to $S^-_1$ are consistent with any concept in $C_1$. 
As mentioned above, it starts by creating the sets $L_1$ and $L_2$ containing all samples in the first and second dimension of $S^+$, respectively.
It then iteratively adds labeled samples from $S^-$.
At each step, the algorithm chooses one element $(x_1, x_2) \in S^-$ at a time and checks which possible assignments to $x_1$ are consistent with $L_1$.
If $(x_1, \bot)$ is consistent, it adds $(x_1, \bot)$ to $L_1$ and calls $RecursiveFindSubconcepts$ on $L_1$ and $L_2$.
 \todo{just add a description saying a sample is consistent with a class when theres a concept that fits the sample}
If $(x_1, \top)$ is consistent with $C_1$, then the algorithm adds $(x_1, \top)$ to $L_1$ and $(x_2, \bot)$ to $L_2$ and calls $RecursiveFindSubconcepts$.
In either case, if an assignment is not consistent, no recursive call is made. 
We can summarize these results in the following theorem.


\begin{theorem}
Let concept classes $C_1$ and $C_2$ have VC-dimension $d_1$ and $d_2$, respectively.
There exists a PAC-learner for $C_1 \times C_2$ that can learn any concept using a sample of size $m = ((d_1 + d_2) \cdot log(1/\epsilon) + log(1/\delta))/\epsilon$.\todo{there are some constants in there}
The learner requires time $O(m^{d_1}(A_1(1/m, log(\delta)) +  A_2(1/m, log(\delta))))$.
\end{theorem}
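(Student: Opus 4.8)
The plan is to establish the theorem in three stages: first fix the sample size via a VC-dimension bound, then argue that Algorithm \ref{recpaclearn} is a correct empirical risk minimizer (ERM) for $C_1 \times C_2$, and finally bound its running time using the growth function.

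First I would pin down the sample complexity. Combining the van Der Vaart--Weller bound $\VC(C_1 \times C_2) \le a_1 \log(2 a_2)(d_1 + d_2) = O(d_1 + d_2)$ (the constant $a_1\log(2a_2)$ is fixed once $k=2$) with Theorem \ref{fundpac} shows that applying any empirical risk minimizer to a sample of size $m = O(((d_1 + d_2)\log(1/\epsilon) + \log(1/\delta))/\epsilon)$ suffices to PAC-learn $C_1 \times C_2$; this is exactly the claimed $m$ up to the constants hidden in the VC bound. It therefore remains only to exhibit an ERM for $C_1 \times C_2$ and bound its runtime.

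Next I would verify that Algorithm \ref{recpaclearn} is such an ERM. Since the target $c_1^* \times c_2^*$ is itself a product concept and every label in $S$ comes from it, the minimum empirical error is $0$, so being an ERM reduces to \emph{finding a product concept consistent with $S$}. Splitting $S$ into $S^+$ and $S^-$, each positive example $(x_1,x_2)$ forces $x_1 \in c_1$ and $x_2 \in c_2$ (by Observation \ref{subobs}), recorded as $(x_1,\top)\in L_1$ and $(x_2,\top)\in L_2$; each negative example $(x_1,x_2)$ only requires $x_1 \notin c_1$ \emph{or} $x_2 \notin c_2$, and the recursion branches on exactly these two options, pruning any branch whose partial labeling of $L_1$ is inconsistent with $C_1$. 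Because the true assignment induced by $c_1^* \times c_2^*$ is one such consistent branch, the search is guaranteed to reach a leaf at which $A_1$ returns a consistent $c_1$ and $A_2$ returns a consistent $c_2$, giving the ERM output $c_1 \times c_2$. To realize the consistency checks I would run each PAC-learner $A_i$ as a finite-sample ERM: feeding it the uniform distribution over $L_i$ with accuracy $1/m$ forces zero error on $L_i$, since any misclassified point carries mass at least $1/|L_i| \ge 1/m$ while the returned hypothesis has error $< 1/m$; the returned concept is then verified directly against $L_i$, and the confidence parameter is chosen small enough that a union bound over all recursive calls preserves overall confidence $\delta$, which is the role of the $\log(\delta)$ argument.

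Finally I would bound the runtime. The crucial point is that the recursion does not branch $2^{|S^-|}$ times: the distinct labelings of $S_1^- = \{x : (x,y)\in S^-\}$ that are consistent with some concept of $C_1$ number at most $G_{C_1}(m) \le (em/d_1)^{d_1} = O(m^{d_1})$, by the definition of the growth function and Lemma \ref{pss}. This caps the number of surviving leaves, hence the number of calls to $A_1$ and $A_2$, by $O(m^{d_1})$; multiplying by the per-call cost $A_1(1/m,\log\delta) + A_2(1/m,\log\delta)$ yields the stated bound. I expect the main obstacle to be the correctness of the pruned search: arguing rigorously that pruning inconsistent partial labelings never discards the branch corresponding to $c_1^*\times c_2^*$, and that the growth-function count genuinely caps the number of \emph{explored} branches rather than merely the number of full labelings. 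The reduction from PAC-learner to finite-sample consistency oracle, together with the union bound on confidence across the $m^{d_1}$ calls, are the remaining points that need care.
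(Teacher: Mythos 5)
Your proposal follows the paper's own argument essentially step for step: the van Der Vaart--Weller bound combined with Theorem \ref{fundpac} for the sample size, the reduction of ERM to finding a consistent product concept in the realizable setting, the recursive branching on negative examples with consistency-based pruning (using the sublearners with accuracy $1/m$ as consistency oracles and a union bound over calls for the confidence parameter, exactly the role of $\epsilon'$ and $\delta'$ in Algorithm \ref{recpaclearn}), and Lemma \ref{pss} to cap the number of surviving branches by $G_{C_1}(m) = O(m^{d_1})$. The one point you flag --- that the growth function bounds consistent \emph{full} labelings while the recursion visits partial ones, costing at most an extra factor of $m$ --- is a genuine looseness that the paper's informal proof glosses over as well, so your treatment is, if anything, slightly more careful than the original.
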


\todo{testing}


\begin{algorithm}[H]
\label{recpaclearn}
\SetAlgoLined
\KwResult{Find Subconcepts Consistent with Sample}
\SetKwProg{FMain}{FindSubconcepts}{}{}
\SetKwProg{FRec}{RecursiveFindSubconcepts}{}{}

\KwIn{
$S^+$: Set of positive examples in $X_1 \times X_2$\\
$S^-$: Set of negative examples in $X_1 \times X_2$\\
$\delta:$ Confidence parameter in $(0,1)$
}
 \FMain{($S^+$, $S^-$, $\delta$)}{
 $\delta' := \delta / (|S^-|G_{C_1}(|S^-|) + G_{C_2}(|S^-|))$\;\todo{describe this parameter}
 $\epsilon' := 1 / |S|$\;
 \tcp{$L_1$: Labelled samples in $X_1$}
 $L_1 := \{(x_1, \top) \mid \exists y,  (x_1, y) \in S^+ \}$\;
  \tcp{$L_2$: Labelled samples in $X_2$}
 $L_2 := \{(x_2, \top) \mid \exists y,  (y, x_2) \in S^+ \}$\; 
 $U := S^-$\;
 \Return RecursiveFindSubconcepts($L_1$, $L_2$, $U$, $\epsilon'$, $\delta'$) \;
 }

\FRec{($L_1$, $L_2$, $U$, $\epsilon'$, $\delta'$):}{

\tcp{Run once all labels are assigned to the first dimension of $U$}
\tcp{Attempts to find concept in $C_2$ consistent with all labels given in $L_2$}
\If{$U = \emptyset$}{
	\eIf{$A_2(L_2, \epsilon', \delta')$}{
		\Return{$(A_1(L_1, \epsilon', \delta'), A_2(L_2, \epsilon', \delta'))$}\;
	}{$\bot$\;}	
}

Get $(x_1, x_2) \in U$\;
$U := U \backslash \{(x_1, x_2)\} $\;

 \tcp{Attempts to label $x_1$ as false in $c^*_1$}
\If{$A_1(L_1 \cup \{ (x_1, \bot) \}, \epsilon', \delta') \ne \bot$}{
	$c = RecursiveFindSubconcepts(L_1 \cup \{ (x_1, \bot) \}, L_2, U, \epsilon', \delta')$\;
	\If{$c \ne \bot$}{\Return c\;}
}

\tcp{Attempts to label $x_1$ as true in $c^*_1$}
\If{$A_1(L_1 \cup \{ (x_1, \top) \}, \epsilon', \delta') \ne \bot$}{
	$c = RecursiveFindSubconcepts(L_1 \cup \{ (x_1, \top) \}, L_2 \cup \{(x_2, \bot) \}, U, \epsilon', \delta')$\;
	\If{$c \ne \bot$}{\Return c \;}
}

}

\caption{Learning Disjoint Unions}
\end{algorithm}

\subsection{Efficient PAC-learning with Membership Queries}

Although polynomial, the complexity of PAC-learning cross-products from a $\pacQ$ oracle is fairly expensive. 
We will show that when a learner is allowed to make membership queries, PAC-learning cross-products becomes much more efficient. 
This is due to the previously shown technique, which uses membership queries and a single positive example to determine on which dimensions a negatively labelled example fails. 

In this case, assuming that $\emptyset \in \Boxtimes C_i$, we can ignore the assumption that a positive example is given. 
If no positive example appears in a large enough labeled sample, the the algorithm can pose $\emptyset$ as the hypothesis.\todo{show that this pac-learns?}


If $S$ does contain a positive example $\vec{p}$, then $S$ can be broken down into labeled samples for each dimension $i$. 
The algorithm initialize the sets of positive and negative examples to $S^+_i := \{ \vec{x}[i] \mid  (\vec{x}, \top) \in S\}$ and $S^-_i := \{ \}$, respectively.
For each $(\vec{x}, \bot) \in S$, a membership queries $\vec{p}[i \leftarrow \vec{x}[i]] \in \targ$. 
If so, $\vec{x}[i] $ is added to $S^+_i$.
Otherwise it is added to $S^-_i$.
This labelling is correct by Observation \ref{posobs}.
The set of labelled examples $S_i := (S_i^+ \times \{\top\}) \cup (S_i^- \times \{\bot\})$ is then passed to the sublearner $A_i$.
$A_i$ is run on $S_i$ with accuracy parameter $\epsilon' := \epsilon / k$ and confidence parameter $\delta' := \delta / k$ \todo{DoubleCHECK THIS}.

\begin{proposition}
The algorithm described above PAC-learns from the concept class $\Boxtimes C_i$ with accuracy $\epsilon$ and confidence $\delta$.
It makes $m_C(\epsilon, \delta)$ queries to $\pacQ$, $k \cdot m_C(\epsilon, \delta)$ membership queries, and has runtime $O(\sum A_i(\epsilon / k, \delta / k))$.
\end{proposition}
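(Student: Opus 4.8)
The plan is to reduce PAC-learning the product $\targ$ to independently PAC-learning each component $c^*_i$ at the tightened parameters $(\epsilon/k,\delta/k)$, and then to stitch the component guarantees together with two union bounds. First I would draw a single sample $S$ of $m=m_C(\epsilon,\delta)$ labelled examples from $\pacQ_\dist$ and, as described above, use the positive example $\vec{p}$ together with $k$ membership queries per negative example to split $S$ into per-coordinate labelled samples $S_i$. Correctness of this relabelling is exactly Observation~\ref{posobs}: for a negative $(\vec{x},\bot)\in S$ the query $\vec{p}[i\leftarrow\vec{x}[i]]\in\targ$ returns true iff $\vec{x}[i]\in c^*_i$, and for a positive example every coordinate already lies in its component, so each element of $S_i$ receives its correct label with respect to $c^*_i$. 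Moreover, since the coordinates of an i.i.d.\ draw $\vec{x}\sim\dist$ are distributed according to the marginal $\dist_i$ of $\dist$ on $X_i$, the relabelled $S_i$ is a correctly-labelled i.i.d.\ sample from $\pacQ_{\dist_i}$ for $c^*_i$. Hence feeding $S_i$ to $A_i$ faithfully simulates running $A_i$ on $\dist_i$.

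\textbf{Combining the component guarantees.} The core is a deterministic error decomposition plus two union bounds. Running each $A_i$ at accuracy $\epsilon/k$ and confidence $\delta/k$, it returns a $c_i$ with $error_{\dist_i}(c_i)\le\epsilon/k$ with probability at least $1-\delta/k$; a union bound over $i$ gives that all $k$ sublearners succeed simultaneously with probability at least $1-\delta$ (no independence is needed). I would then establish the key containment, writing $\triangle$ for symmetric difference,
\[
\Big(\textstyle\prod c_i\Big)\,\triangle\,\targ\;\subseteq\;\bigcup_{i=1}^{k}\big\{\vec{x}\mid\vec{x}[i]\in c_i\,\triangle\,c^*_i\big\},
\]
which holds because any $\vec{x}$ on which $\prod c_i$ and $\targ$ disagree must disagree in some coordinate: if $\vec{x}\in\prod c_i\setminus\targ$ some coordinate leaves $c^*_i$, and if $\vec{x}\in\targ\setminus\prod c_i$ some coordinate leaves $c_i$. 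Taking probabilities under $\dist$, and using that the $i$-th event depends only on the marginal $\dist_i$, a union bound yields
\[
error_\dist\Big(\textstyle\prod c_i\Big)\;\le\;\sum_{i=1}^{k}error_{\dist_i}(c_i)\;\le\;k\cdot\frac{\epsilon}{k}\;=\;\epsilon.
\]
So, conditioned on the good event of probability at least $1-\delta$, the hypothesis $\prod c_i$ has error at most $\epsilon$, which is the required PAC guarantee for $\targ$.

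\textbf{Empty-concept case and accounting.} For the case $\emptyset\in\Boxtimes C_i$ with no positive example supplied, I would argue that returning $\emptyset$ whenever $S$ has no positive example is safe: if $error_\dist(\emptyset)=\Pr_\dist[\vec{x}\in\targ]>\epsilon$ then the chance that none of the $m$ samples is positive is at most $(1-\epsilon)^m\le e^{-\epsilon m}\le\delta$ once $m\ge\frac1\epsilon\ln(1/\delta)$, which is dominated by $m_C(\epsilon,\delta)$; allocating the confidence budget between this event and the sublearner union bound, with probability at least $1-\delta$ either $\emptyset$ is output correctly or a positive example is found and the reduction above applies. For the query and time accounting: the algorithm draws the sample once, giving $m_C(\epsilon,\delta)$ calls to $\pacQ_\dist$; positive examples are labelled for free while each negative example costs $k$ membership queries, for at most $k\cdot m_C(\epsilon,\delta)$ calls to $\memQ$; and the only nontrivial computation is one invocation of each $A_i$ on $S_i$ at $(\epsilon/k,\delta/k)$, giving runtime $O(\sum_i A_i(\epsilon/k,\delta/k))$ after absorbing the linear-time relabelling into the sublearner cost.

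\textbf{Main obstacle.} The step I expect to require the most care is verifying that the sample size $m_C(\epsilon,\delta)$ genuinely suffices for every sublearner at the tightened parameters, i.e.\ that $m_C(\epsilon,\delta)\ge\max_i m_{C_i}(\epsilon/k,\delta/k)$; this is where the logarithmic bound $\VC(\prod C_i)\le a_1\log(ka_2)\sum\VC(C_i)$ of van der Vaart and Weller, fed through Theorem~\ref{fundpac}, must be weighed against the $k$-fold blow-up of the per-coordinate sample complexities. The conceptual content, namely the error decomposition and the two union bounds, is routine once the marginal-distribution observation is in place; the obstacle is essentially this quantitative sample-size bookkeeping.
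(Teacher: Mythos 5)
Your reduction matches the algorithm the paper sketches (the paper, notably, states this proposition without any proof), and most of your steps are sound: the relabelling via Observation~\ref{posobs}, the observation that each $S_i$ is an i.i.d.\ sample from the marginal $\dist_i$, the symmetric-difference decomposition with its two union bounds, the empty-concept fallback, and the query accounting are all correct. But the step you defer to the end as "bookkeeping" is a genuine gap, and it cannot be closed as stated. Your argument needs each relabelled sample $S_i$, whose size is fixed at $m_C(\epsilon,\delta)$ by the single draw from $\pacQ_\dist$, to suffice for $A_i$ at parameters $(\epsilon/k,\delta/k)$, i.e.\ $m_C(\epsilon,\delta)\ge\max_i m_{C_i}(\epsilon/k,\delta/k)$. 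This inequality is false in general: by Theorem~\ref{fundpac}, $m_{C_i}(\epsilon/k,\delta/k)$ scales as $k\bigl(d_i\log(k/\epsilon)+\log(k/\delta)\bigr)/\epsilon$, while the van der Vaart--Weller bound gives $m_C(\epsilon,\delta) = O\bigl(\log(ka_2)\,(\textstyle\sum_j d_j)\log(1/\epsilon)/\epsilon\bigr)$. Taking unbalanced dimensions, say $d_1=D\gg k$ and $d_i=1$ for $i\ge 2$, the demand exceeds the supply by a factor of roughly $k/\log k$. So with only $m_C(\epsilon,\delta)$ draws, sublearner $A_1$ cannot guarantee error $\epsilon/k$ under $\dist_1$, and your error decomposition has nothing to sum. (The paper's own \texttt{todo} next to the choice $\epsilon'=\epsilon/k$, $\delta'=\delta/k$ suggests the authors had not verified this point either.)

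The repair is to bypass per-coordinate accuracy entirely. Observation~\ref{posobs} guarantees that each $S_i$ is \emph{correctly} labelled by $c^*_i$, so the per-coordinate problems are realizable; if each sublearner returns a hypothesis $c_i$ consistent with $S_i$ (an empirical risk minimizer suffices, which is exactly the framework of the paper's preceding subsection), then $\prod c_i$ is consistent with the entire sample: every positive example has all coordinates labelled $\top$, and every negative example has its failing coordinate labelled $\bot$, hence is excluded from the product. Applying Theorem~\ref{fundpac} directly to $\Boxtimes C_i$, whose VC-dimension is at most $a_1\log(ka_2)\sum d_i$, then yields error at most $\epsilon$ with confidence $1-\delta$ from exactly $m_C(\epsilon,\delta)$ examples, recovering the stated $\pacQ$ and $\memQ$ counts with no per-sublearner union bound over accuracy at all. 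Alternatively, your route can be saved by drawing $\max_i m_{C_i}(\epsilon/k,\delta/k)$ examples, but that weakens the query bound in the statement. As written, your two-union-bound argument does not prove the proposition at the stated sample size.
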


\section{Conclusion}
The final collection of query complexities for learning cross products is given in Figure \ref{complexitytable}.
All of the bounds are tight, except for the problem of learning with superset queries, membership queries, and one positive example.  
Additionally, learning disjoint unions from any $Q \in \{ \{\eqQ\}, \{\subQ\}, \{\supQ\}\}$ requires as many queries as is needed to learn each concept separately. 
Finally, we have shown that the computational complexity of PAC learning cross-products of concepts is a polynomial function of learning the individual concepts.

\begin{figure}
\begin{center}
\renewcommand{\arraystretch}{1.5}
\begin{tabular}{ |c|c|c|c| } 
\cline{2-4}
\multicolumn{1}{c|}{} & Only $Q$ & \multicolumn{2}{c|}{$Q$ with $\memQ$ and $\oneposQ$} \\
\cline{1-1}
\multicolumn{1}{|c|}{$Q \downarrow$} & \genC & \memC & \genC \\
\hline
\posQ & Not Possible  & Not Possible & Not Possible \\
\hline
\supQ & $\sum \supCi{i}$ & $0$ & $\sum \supCi{i}$\\
\hline
\memQ & $(max_i\{\memCi{i}\})^k$ & $\sum \memCi{i}$ & $\sum \memCi{i}$ \\
\hline
\subQ & $k^{\sum \subCi{i}}$ & $k \sum \subCi{i}$  & $\sum \subCi{i}$ \\
\hline
\eqQ  &$k^{\sum \eqCi{i}}$ &  $k \sum \eqCi{i}$ &  $\sum \eqCi{i}$\\
\hline
\end{tabular}
\renewcommand{\arraystretch}{1}
\end{center}
\caption{
Final collection of query complexities for learning cross products. 
The rows represents the set $Q$ of queries needed to learn each $C_i$.  
The columns determine whether the cross product is learned from queries in just $Q$ or $Q \cup \{\memQ, \oneposQ\}$. 
In the latter case, the column is separated to track the number of membership queries and queries in $Q$ that are needed.
The value $k$ denotes the number of dimensions (i.e., concept classes) included in the cross-product.
 }
 \label{complexitytable}
\end{figure}\todo{Is there a way to add PAC and disjunion results to this table, or make them apparent in this section?}

\bibliographystyle{unsrt}
\bibliography{bibliography}

\end{document}